\definecolor{midgreen}{rgb}{0.1,0.5,0.1}
\definecolor{darkgray}{gray}{0.25}
\definecolor{lightblue}{rgb}{0.25,0.25,1}
\title{Federated Learning via Posterior Averaging:\\A New Perspective and Practical Algorithms}
\author{%
  Maruan~Al-Shedivat\thanks{Most of the work done at Google.
Correspondence: \href{http://maruan.alshedivat.com}{maruan.alshedivat.com}} \\
  CMU \\
  \And
  Jennifer~Gillenwater \\
  Google \\
  \And
  Eric~Xing \\
  CMU \\
  \And
  Afshin~Rostamizadeh \\
  Google%
}
\begin{document}

\maketitle

\begin{abstract}
Federated learning is typically approached as an optimization problem, where the goal is to minimize a global loss function by distributing computation across client devices that possess local data and specify different parts of the global objective.
We present an alternative perspective and formulate federated learning as a posterior inference problem, where the goal is to infer a global posterior distribution by having client devices each infer the posterior of their local data.  While exact inference is often intractable, this perspective provides a principled
way to search for global optima in federated settings.
Further, starting with the analysis of federated quadratic objectives, we develop a computation- and communication-efficient approximate posterior inference algorithm---\emph{federated posterior averaging} (\FedPA).
Our algorithm uses MCMC for approximate inference of local posteriors on the clients and efficiently communicates their statistics to the server, where the latter uses them to refine a global estimate of the posterior mode.
Finally, we show that \FedPA generalizes federated averaging (\FedAvg), can similarly benefit from adaptive optimizers, and yields state-of-the-art results on four realistic and challenging benchmarks, converging faster, to better optima.
\end{abstract}

\section{Introduction}
\label{sec:introduction}

Federated learning (FL) is a framework for learning statistical models from heterogeneous data scattered across multiple entities (or clients) under the coordination of a central server that has no direct access to the local data~\citep{kairouz2019advances}.
To learn models without any data transfer, clients must process their own data locally and only infrequently communicate some model updates to the server which aggregates these updates into a global model~\citep{mcmahan2017communication}.
While this paradigm enables efficient distributed learning from data stored on millions of remote devices~\citep{hard2018federated}, it comes with many challenges~\citep{li2020federated}, with the communication cost often being the critical bottleneck and the heterogeneity of client data affecting convergence.

Canonically, FL is formulated as a distributed optimization problem with a few distinctive properties such as unbalanced and non-i.i.d.\ data distribution across the clients and limited communication.
The \emph{de facto} standard algorithm for solving federated optimization is federated averaging~\citep[\FedAvg,][]{mcmahan2017communication}, which proceeds in rounds of communication between the server and a random subset of clients, synchronously updating the server model after each round~\citep{bonawitz2019towards}.
By allowing the clients perform \emph{multiple} local SGD steps (or epochs) at each round, \FedAvg can reduce the required communication by orders of magnitude compared to mini-batch (MB) SGD.

However, due to heterogeneity of the client data, more local computation often leads to biased
client updates and makes \FedAvg stagnate at inferior optima.
As a result, while slow during initial training, MB-SGD ends up dominating \FedAvg at convergence (see example in \cref{fig:illustration-2d}).  This has been observed in multiple empirical studies~\citep[\eg,][]{charles2020outsized}, and recently was shown theoretically~\citep{woodworth2020minibatch}.
Using stateful clients~\citep{karimireddy2019scaffold, pathak2020fedsplit} can help to remedy the convergence issues in the \emph{cross-silo} setting, where relatively few clients are queried repeatedly, but is not practical in the \emph{cross-device} setting (\ie, when clients are mobile devices) for several reasons~\citep{kairouz2019advances, li2020federated, lim2020federated}.
One key issue is that the number of clients in such a setting is extremely large and the average client will only ever participate in a single FL round.
Thus, the state of a stateful algorithm is never used.

\emph{Is it possible to design FL algorithms that exhibit both fast training and consistent convergence with stateless clients?}
In this work, we answer this question affirmatively, by approaching federated learning not as optimization but rather as posterior inference problem.
We show that modes of the global posterior over the model parameters correspond to the desired optima of the federated optimization objective and can be inferred by aggregating information about local posteriors.
Starting with an analysis of federated quadratics, we introduce a general class of \emph{federated posterior inference} algorithms that run local posterior inference on the clients and global posterior inference on the server.
In contrast with federated optimization, posterior inference can, with stateless clients, benefit from an increased amount of local computation without stagnating at inferior optima (illustrated in \cref{fig:illustration-2d}). %
However, a na\"{i}ve approach to federated posterior inference is practically infeasible 
because its computation and communication costs are cubic and quadratic in the model parameters, respectively.  Apart from the new perspective, our key technical contribution is the design of an efficient algorithm with linear computation and communication costs.

\begin{figure}[t]
    \centering
    \vspace{-3ex}
    \includegraphics[width=\linewidth]{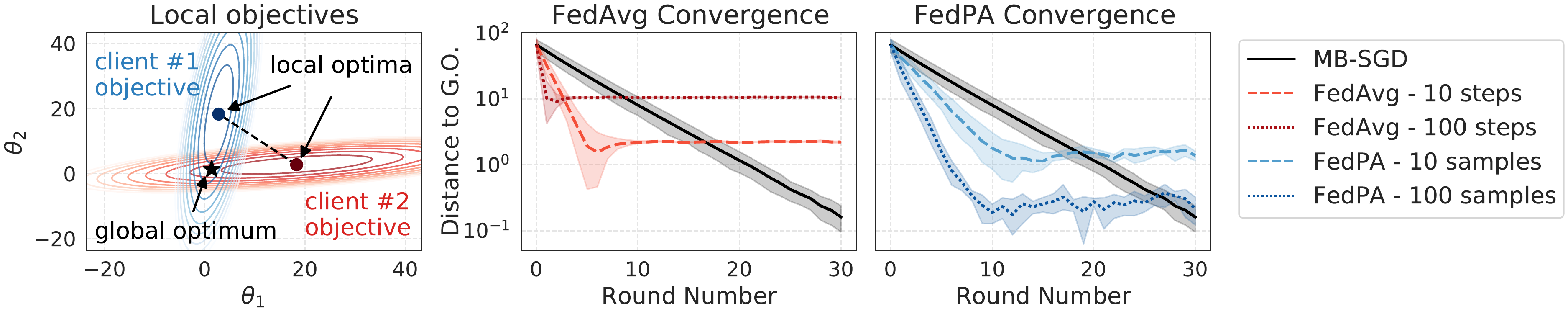}%
    \vspace{-1ex}
    \caption{An illustration of federated learning in a toy 2D setting with two clients and quadratic objectives.
    \textbf{Left:} Contour plots of the client objectives, their local optima, as well as the corresponding global optimum.
    \textbf{Middle:} Learning curves for MB-SGD and \FedAvg with 10 and 100 steps per round.
    \FedAvg makes fast progress initially, but converges to a point far away from the global optimum.
    \textbf{Right:} Learning curves for \FedPA with 10 and 100 posterior samples per round and shrinkage $\rho = 1$.
    More posterior samples (\ie, more local computation) results in faster convergence and allows \FedPA to come closer to the global optimum.
    Shaded regions denote bootstrapped 95\% CI based on 5 runs with different initializations and random seeds.
    Best viewed in color.}
    \label{fig:illustration-2d}
    \vspace{-2ex}
\end{figure}

\vspace{-1.5ex}
\paragraph{Contributions.}
The main contributions of this paper can be summarized as follows:
\begin{enumerate}[topsep=-4pt,itemsep=0pt,leftmargin=20pt]
    \item We introduce a new perspective on federated learning through the lens of posterior inference which broadens the design space for FL algorithms beyond purely optimization techniques.
    \item With this perspective, we design a computation- and communication-efficient approximate posterior inference algorithm---\emph{federated posterior averaging} (\FedPA).
    \FedPA works with stateless clients and its computational complexity and memory footprint are similar to \FedAvg.
    \item We show that \FedAvg with many local steps is in fact a special case of \FedPA that estimates local posterior covariances with identities.
    These biased estimates are the source of inconsistent updates and explain why \FedAvg has suboptimal convergence even in simple quadratic settings.
    \item Finally, we compare \FedPA with strong baselines on realistic FL benchmarks introduced by \citet{reddi2020adaptive} and achieve state-of-the-art results with respect to multiple metrics of interest.
\end{enumerate}

\section{Related Work}
\label{sec:related-work}

\paragraph{Federated optimization.}
Starting with the seminal paper by \citet{mcmahan2017communication}, a lot of recent effort in federated learning has focused on understanding of \FedAvg (also known as local SGD) as an optimization algorithm.
Multiple works have provided upper bounds on the convergence rate of \FedAvg in the homogeneous i.i.d.\ setting~\citep{yu2019parallel, karimireddy2019scaffold, woodworth2020local} as well as explored various non-i.i.d.\ settings with different notions of heterogeneity~\citep{zhao2018federated, sahu2018convergence, hsieh2019non, li2019convergence, wang2020tackling, woodworth2020minibatch}.
\citet{reddi2020adaptive} reformulated \FedAvg in a way that enabled adaptive optimization and derived corresponding convergence rates, noting that \FedAvg requires careful tuning of learning rate schedules in order to converge to the desired optimum, which was further analyzed by \citet{charles2020outsized}.
To the best of our knowledge, our work is perhaps the first to connect, reinterpret, and analyze federated optimization from the probabilistic inference perspective.

\vspace{-1.75ex}
\paragraph{Distributed MCMC.}
Part of our work builds on the idea of sub-posterior aggregation, which was originally proposed for scaling up Markov chain Monte Carlo techniques to large datasets~\citep[known as the \emph{concensus Monte Carlo},][]{neiswanger2013asymptotically, scott2016bayes}.
One of the goals of this paper is to highlight the connection between distributed inference and federated optimization and develop inference techniques that can be used under FL-specific constraints.

\vspace{-1ex}
\section{A Posterior Inference Perspective on Federated Learning}
\label{sec:federated-posterior-inference}

Federated learning is typically formulated as the following optimization problem:
\begin{equation}
    \label{eq:fl-opt}
    \min_{\thetav \in \Rb^d} \left\{ F(\thetav) := \sum_{i=1}^N q_i f_i(\thetav) \right\}, \quad f_i(\thetav) := \frac{1}{n_i} \sum_{j=1}^{n_i} f(\thetav; z_{ij}),
\end{equation}
where the global objective function $F(\thetav)$ is a weighted average of the local objectives $f_i(\thetav)$ over $N$ clients; each client's objective is some loss $f(\thetav; z)$ computed on the local data $D_i = \{z_{i1}, \ldots, z_{i n_i}\}$.
In real-world cross-device applications, the total number of clients $N$ can be extremely large, and hence optimization of $F(\thetav)$ is done over multiple rounds with only a small subset of $M$ clients participating in each round.
The weights $\{q_i\}$ are typically set proportional to the sizes of the local datasets $\{n_i\}$, which makes $F(\thetav)$ coincide with the training objective of the centralized setting.

Typically, $f(\thetav; z)$ is negative log likelihood of $z$ under some probabilistic model parametrized by $\thetav$, \ie, $f(\thetav; z) := - \log \prob{z \mid \thetav}$.
For example, least squares loss corresponds to likelihood under a Gaussian model, cross entropy loss corresponds to likelihood under a categorical model, etc.~\citep{murphy2012book}.
Thus, \cref{eq:fl-opt} corresponds to \emph{maximum likelihood estimation} (MLE) of the model parameters $\thetav$.

An alternative (Bayesian) approach to maximum likelihood estimation is \emph{posterior inference} or estimation of the posterior distribution of the parameters given all the data: $\prob{\thetav \mid D \equiv D_1 \cup \dots \cup D_N}$.
The posterior is proportional to the product of the likelihood and a prior, $\prob{\thetav \mid D} \propto \prob{D \mid \thetav} \prob{\thetav}$, and, if the prior is uninformative (uniform over all $\thetav$), the modes of the global posterior coincide with MLE solutions or optima of $F(\thetav)$ in \cref{eq:fl-opt}.
While this simple observation establishes an equivalence between the inference of the posterior mode and optimization, the advantage of this perspective comes from the fact that the global posterior \emph{exactly} decomposes into a product of local posteriors.\footnote{%
Note that from the optimization point of view, the global optimum generally cannot be represented as any weighted combination of the local optima even in simple 2D settings (see \cref{fig:illustration-2d}, left).}

\vspace{-1.25ex}
\begin{restatable}[Global Posterior Decomposition]{proposition}{posteriordecomposition}
\label{prop:posterior-decomposition}
Under the uniform prior, any global posterior distribution that exists decomposes into a product of local posteriors: $\prob{\thetav \mid D} \propto \prod_{i=1}^N \prob{\thetav \mid D_i}$.
\end{restatable}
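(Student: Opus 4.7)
The plan is to apply Bayes' rule twice and invoke conditional independence of the per-client data partitions given the parameters. First, I would write the global posterior via Bayes' rule as $\prob{\thetav \mid D} \propto \prob{D \mid \thetav}\,\prob{\thetav}$, which under the uniform prior simplifies to $\prob{\thetav \mid D} \propto \prob{D \mid \thetav}$. The loss factorization $f_i(\thetav) = -\frac{1}{n_i}\sum_{j=1}^{n_i} \log \prob{z_{ij} \mid \thetav}$ introduced earlier in the paper implicitly assumes that the observations $z_{ij}$ are conditionally independent given $\thetav$; because $D$ is the disjoint union $D_1 \cup \cdots \cup D_N$, this yields $\prob{D \mid \thetav} = \prod_{i=1}^N \prob{D_i \mid \thetav}$.

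Next, I would invert each local likelihood by a second application of Bayes' rule on each client, writing $\prob{D_i \mid \thetav} = \prob{\thetav \mid D_i}\,\prob{D_i}/\prob{\thetav}$. The evidences $\prob{D_i}$ do not depend on $\thetav$ and may be absorbed into the overall proportionality constant, and under the uniform prior the $1/\prob{\thetav}$ factors are constant in $\thetav$ as well. Substituting this expression for each factor into the product above yields $\prob{\thetav \mid D} \propto \prod_{i=1}^N \prob{\thetav \mid D_i}$, which is the claimed decomposition.

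The only subtlety worth flagging is that a uniform prior on $\Rb^d$ is improper, so all equalities must be interpreted in the sense of proportionality up to a finite normalization constant depending only on $D$. The qualifier ``any global posterior that exists'' in the statement is precisely what is needed for rigor: it guarantees that the right-hand side is integrable in $\thetav$ and hence defines a bona fide probability density, avoiding the pathology of an improper posterior. I do not foresee a genuine obstacle here — the result is essentially a one-line consequence of Bayes' rule together with the conditional independence of the client datasets, and the proof can be kept to just a few lines.
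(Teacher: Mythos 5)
Your proposal is correct and follows essentially the same route as the paper's proof: Bayes' rule under the uniform prior, factorization of the likelihood across clients via conditional independence of the data given $\thetav$, and a second application of Bayes' rule per client with the evidences $\prob{D_i}$ absorbed into the proportionality constant. The paper merely factors down to individual data points and regroups them by client, which is an immaterial difference, and your remark on the improper prior and the ``exists'' qualifier is a sensible (if optional) addition.
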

\vspace{-1.5ex}

\noindent
\cref{prop:posterior-decomposition} suggests that as long as we are able to compute local posterior distributions $\prob{\thetav \mid D_i}$ and communicate them to the server, we should be able to solve \cref{eq:fl-opt} by multiplicatively aggregating them to find the mode of the global posterior $\prob{\thetav \mid D}$ on the server.
Note that posterior inference via multiplicative averaging has been successfully used to scale Monte Carlo methods to large datasets, where the approach is \emph{embarrassingly parallel} \citep{neiswanger2013asymptotically, scott2016bayes}.
In the FL context, this means that once all clients have sent their local posteriors to the server, we can construct the global posterior without any additional communication.
However, there remains the challenge of making the local and global inference and communication efficient enough for real federated settings.
The example below illustrates how this can be difficult even for a simple model and loss function.

\vspace{-1.5ex}
\paragraph{Federated least squares.}
Consider federated least squares regression with a linear model, where $z := (\xv, y)$ and the loss $f(\thetav; \xv, y) := \frac{1}{2} (\xv^\top \thetav - y)^2$ is quadratic.
Then, the client objective becomes:
\begin{equation}
    \label{eq:least-squares-client-objective}
    f_i(\thetav) = \log \exp \left\{ \frac{1}{2} \|\Xv_i \thetav - \yv_i\|^2 \right\} = \log \exp \left\{ \frac{1}{2} (\thetav - \muv_i)^\top \Sigmav_i^{-1} (\thetav - \muv_i) \right\} + \mathrm{const},
\end{equation}
where $\Xv_i \in \Rb^{n_i \times d}$ is the design matrix, $\yv_i \in \Rb^{n_i}$ is the response vector, $\Sigmav_i^{-1} := \Xv_i^\top \Xv_i$ and $\muv_i := \left(\Xv_i^\top \Xv_i\right)^{-1} \Xv_i^\top \yv_i$.
Note that the expression in \cref{eq:least-squares-client-objective} is the log likelihood for a multivariate Gaussian distribution with mean $\muv_i$ and covariance $\Sigmav_i$.
Therefore, each local posterior (under the uniform prior) is Gaussian, and, as a product of Gaussians, the global posterior is also Gaussian with the following mean (which coincides with the posterior mode):
\begin{equation}
    \label{eq:least-squares-global-posterior-mean}
    \muv := \left( \sum_{i=1}^N q_i \Sigmav_i^{-1} \right)^{-1} \left( \sum_{i=1}^N q_i \Sigmav_i^{-1} \muv_i \right).
\end{equation}
Concretely, in the case of least squares regression, this suggests that it is sufficient for clients to infer the means $\{\muv_i\}$ and inverse covariances $\{\Sigmav_i^{-1}\}$ of their local posteriors and communicate that information to server for the latter to be able to find the global optimum.  However, a straightforward application of \cref{eq:least-squares-global-posterior-mean} would require $\Oc(d^2)$ space and $\Oc(d^3)$ computation, both on the clients and on the server, which is very expensive for the typical cross-device FL setting.  Similarly, the communication cost would be $\Oc(d^2)$, while standard FL algorithms have communication cost of $\Oc(d)$.

\paragraph{Approximate federated posterior inference.}
Apart from the computation and communication issues discussed in the simple example above, we also have to contend with the fact that, generally, posteriors are non-Gaussian and closed form expressions for global posterior modes may not exist.\footnote{%
Gaussian posteriors can be further generalized to the exponential family for which closed form expressions can be obtained under appropriate priors~\citep{wainwright2008graphical}. We leave this extension to future work.}
In such cases, we propose to use the \emph{Laplace approximation for local and global posteriors}, \ie, approximate them with the best-fitting Gaussians.
While imperfect, this approximation will allow us to compute the (approximate) global posterior mode in a computation- and communication-efficient manner using the following three steps: (i) infer approximate local means $\{\hat \muv_i\}$ and covariances $\{\hat \Sigmav_i\}$, (ii) communicate these to the server, and (iii) compute the posterior mode given by \cref{eq:least-squares-global-posterior-mean}.  Note that directly computing and communicating these quantities would be completely infeasible for the realistic setting where models are neural networks with millions of parameters.  In the following section, we design a practical algorithm where all costs are linear in the number of model parameters.

\section{Federated Posterior Averaging: A Practical Algorithm}
\label{sec:federated-posterior-averaging}

Federated averaging~\citep[\FedAvg,][]{mcmahan2017communication} solves the problem from~\cref{eq:fl-opt} over $T$ rounds by interacting with $M$ random clients at each round in the following way: (i) broadcasting the current model parameters $\thetav$ to the clients, (ii) running SGD for $K$ steps on each client, and (iii) updating the global model parameters by collecting and averaging the final SGD iterates.
\citet{reddi2020adaptive} reformulated the same algorithm in the form of server- and client-level optimization (\cref{alg:generalized-fed-opt}), which allowed them to bring techniques from the adaptive optimization literature to FL.

\begin{figure}[t]
\vspace{-2ex}
\begin{minipage}[t]{0.50\textwidth}
    \vspace{0pt}
    \begin{algorithm}[H]
        \small
        \caption{Generalized Federated Optimization}
        \label{alg:generalized-fed-opt}
        \begin{algorithmic}[1]
        \INPUT initial $\thetav$, \textsc{ClientUpdate}, \textsc{ServerUpdate}
        \FOR{each round $t = 1, \dots, T$}
            \STATE Sample a subset $\Sc$ of clients
            \STATE \textbf{communicate}  $\thetav$ to all $i \in \Sc$
            \textcolor{gray}{// server $\rightarrow$ clients}
            \FOR{each client $i \in \Sc$ \textbf{in parallel}}
                \STATE $\Deltav_i^t, q_i \leftarrow \text{\textsc{ClientUpdate}}(\thetav)$
            \ENDFOR
            \STATE \textbf{communicate} $\{\Deltav_i^t, q_i\}_{i \in \Sc}$ \hfill
            \textcolor{gray}{// server $\leftarrow$ clients}
            \STATE $\Deltav^t \leftarrow \frac{1}{|\Sc|} \sum_{i \in \Sc} q_i \Deltav_i^t$ \hfill \textcolor{gray}{// aggregate updates}
            \STATE $\thetav \leftarrow \text{\textsc{ServerUpdate}}(\thetav, \Deltav^t)$
        \ENDFOR
        \OUTPUT final $\thetav$
        \vspace{0.5pt}
        \end{algorithmic}
    \end{algorithm}
\end{minipage}
\hfill
\begin{minipage}[t]{0.475\textwidth}
    \vspace{0pt}
    \begin{algorithm}[H]
        \small
        \caption{Client Update (\textcolor{google-red}{\FedAvg})}
        \label{alg:fedavg-client-update}
        \begin{algorithmic}[1]
        \INPUT initial $\thetav_0$, loss $f_i(\thetav)$, optimizer \textsc{ClientOpt}
        \FOR{$k = 1, \dots, K$}
            \STATE $\thetav_k \leftarrow \text{\textsc{ClientOpt}}(\thetav_{k-1}, \hat \nabla f_i(\thetav_{k-1}))$
        \ENDFOR
        \OUTPUT $\Deltav := \thetav_0 - \thetav_k$, client weight $q_i$
        \end{algorithmic}
    \end{algorithm}
    \vspace{-5.75ex}
    \begin{algorithm}[H]
        \small
        \caption{Client Update (\textcolor{google-blue}{\FedPA})}
        \label{alg:fedpa-client-update}
        \begin{algorithmic}[1]
        \INPUT initial $\thetav_0$, loss $f_i(\thetav)$, sampler \textsc{ClientMCMC}
        \FOR{$k = 1, \dots, K$}
            \STATE $\thetav_k \sim \text{\textsc{ClientMCMC}}(\thetav_{k-1}, f_i)$
        \ENDFOR
        \OUTPUT $\Deltav := \hat \Sigmav^{-1} (\thetav_0 - \hat \muv)$, client weight $q_i$
        \end{algorithmic}
    \end{algorithm}
\end{minipage}
\vspace{-1ex}
\end{figure}

\FedAvg is efficient in that it requires only $\Oc(d)$ computation on both the clients and the server, and $\Oc(d)$ communication between each client and the server.
To arrive at a similarly efficient algorithm for %
posterior inference, we focus on the following questions:
(a) how to estimate local and global posterior moments efficiently?
(b) how to communicate local statistics to the server efficiently?

\vspace{-1.5ex}
\paragraph{(1) Efficient global posterior inference.}
There are two issues with computing an estimate of the global posterior mode $\muv$ directly using \cref{eq:least-squares-global-posterior-mean}.
First, it requires computing the inverse of a $d \times d$ matrix on the server, which is an $\Oc(d^3)$ operation.
Second, it relies on acquiring local means and inverse covariances, which would require $\Oc(d^2)$ communication from each client.
We propose to solve both issues by converting the global posterior estimation into an equivalent optimization problem.

\vspace{-1.5ex}
\begin{restatable}[Global Posterior Inference]{proposition}{posteriorinference}
\label{prop:global-posterior-inference}
The global posterior mode $\muv$ given in \cref{eq:least-squares-global-posterior-mean} is the minimizer of a quadratic $\Qc(\thetav) := \frac{1}{2} \thetav^\top \Av \thetav - \bv^\top \thetav$, where $\Av := \sum_{i=1}^N q_i \Sigmav_i^{-1}$ and $\bv := \sum_{i=1}^N q_i \Sigmav_i^{-1} \muv_i$.
\end{restatable}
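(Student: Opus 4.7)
The plan is to verify the claim by a direct first-order optimality argument: since $\Qc(\thetav)$ is a quadratic form in $\thetav$, its unique minimizer (when one exists) is obtained by setting its gradient to zero. I would compute $\nabla \Qc(\thetav) = \Av \thetav - \bv$ and then solve $\Av \thetav = \bv$. Substituting the definitions of $\Av$ and $\bv$, the solution is exactly
\[
\thetav^\star = \Av^{-1}\bv = \left(\sum_{i=1}^N q_i \Sigmav_i^{-1}\right)^{-1}\left(\sum_{i=1}^N q_i \Sigmav_i^{-1} \muv_i\right),
\]
which coincides with $\muv$ in \cref{eq:least-squares-global-posterior-mean}.

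To ensure the stationary point is indeed a (unique) minimizer rather than a saddle or maximum, I would check that $\Av$ is positive definite. Each local inverse covariance $\Sigmav_i^{-1}$ is positive definite (it corresponds to a well-defined Gaussian posterior, and in the least-squares setting considered we assume $\Xv_i^\top \Xv_i \succ 0$), and the weights $q_i$ are nonnegative with at least one positive, so $\Av = \sum_i q_i \Sigmav_i^{-1} \succ 0$. This yields strict convexity of $\Qc$, so $\thetav^\star$ is the unique global minimizer, and moreover $\Av$ is invertible, which justifies the expression $\Av^{-1}\bv$.

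There is essentially no obstacle here; the statement is a routine re-expression of a linear-system solution as the argmin of a strongly convex quadratic. The only subtlety worth flagging is the invertibility assumption on $\Av$, which should be made explicit (or derived from the assumption that the global posterior exists, as already required in \cref{prop:posterior-decomposition}). If one wanted a fully coordinate-free derivation, one could alternatively complete the square:
\[
\Qc(\thetav) = \tfrac{1}{2}(\thetav - \Av^{-1}\bv)^\top \Av (\thetav - \Av^{-1}\bv) - \tfrac{1}{2}\bv^\top \Av^{-1}\bv,
\]
which makes it manifest that $\thetav = \Av^{-1}\bv = \muv$ minimizes $\Qc$. This form also directly exposes the connection to the log-density of the Gaussian global posterior (up to an additive constant), which is conceptually useful for motivating the optimization-based estimator used by \FedPA on the server.
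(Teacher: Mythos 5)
Your proof is correct and follows essentially the same route as the paper's: set the gradient $\Av\thetav - \bv$ to zero, identify the stationary point with $\muv$, and use positive definiteness of $\Av$ (as a nonnegative combination of positive definite $\Sigmav_i^{-1}$) to conclude it is the unique minimizer. The completing-the-square remark is a nice addition but not needed; the paper likewise treats invertibility as an implicit assumption.
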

\vspace{-1.5ex}

\cref{prop:global-posterior-inference} allows us to obtain a good estimate of $\muv$ by running stochastic optimization of the quadratic objective $\Qc(\thetav)$ on the server.
Note that the gradient of $\Qc(\thetav)$ has the following form:
\vspace{-1ex}
\begin{equation}
    \label{eq:server-objective-gradient}
    \nabla \Qc(\thetav) := \sum_{i=1}^N q_i \Sigmav_i^{-1} (\thetav - \muv_i),
\end{equation}
which suggests that we can obtain $\muv$ by using the same \cref{alg:generalized-fed-opt} as \FedAvg but using different client updates: $\Deltav_i := \Sigmav_i^{-1} (\thetav - \muv_i)$.
Importantly, as long as clients are able to compute $\Deltav_i$'s, this approach will result in $\Oc(d)$ communication and $\Oc(d)$ server computation cost per round.

\begin{wrapfigure}[18]{r}{0.50\textwidth}
\vspace{-2ex}
\begin{minipage}[t]{0.52\textwidth}
    \vspace{0pt}
    \begin{algorithm}[H]
        \small
        \caption{IASG Sampling (\textsc{ClientMCMC})}
        \label{alg:iasg-sampling}
        \begin{algorithmic}[1]
        \INPUT initial $\thetav$, loss $f_i(\thetav)$, optimizer \textsc{ClientOpt}$(\alpha)$,\\
        $B$: burn-in steps, $K$: steps per sample, $\ell$: \# samples.\\
        \textcolor{gray}{// Burn-in}
        \FOR{step $t = 1, \dots, B$}
            \STATE $\thetav \leftarrow \text{\textsc{ClientOpt}}(\thetav, \hat \nabla f_i(\thetav))$
        \ENDFOR \\
        \textcolor{gray}{// Sampling}
        \FOR{sample $s = 1, \dots, \ell$}
            \STATE $\Sc_{\thetav} \leftarrow \varnothing$ \hfill \textcolor{gray}{// Initialize iterates}
            \FOR{step $t = 1, \dots, K$}
                \STATE $\thetav \leftarrow \text{\textsc{ClientOpt}}(\thetav, \hat \nabla f_i(\thetav))$
                \STATE $\Sc_{\thetav} \leftarrow \Sc_{\thetav} \cup \{\thetav\}$
            \ENDFOR
            \STATE $\thetav_s \leftarrow \text{\textsc{Average}}(\Sc_{\thetav})$ \hfill \textcolor{gray}{// Average iterates}
        \ENDFOR
        \OUTPUT samples $\{\thetav_1, \ldots, \thetav_\ell\}$
        \end{algorithmic}
    \end{algorithm}
\end{minipage}%
\end{wrapfigure}

\paragraph{(2) Efficient local posterior inference.}
To compute $\Deltav_i$, each client needs to be able to estimate the local posterior means and covariances.
We propose to use stochastic gradient Markov chain Monte Carlo~\citep[SG-MCMC,][]{welling2011bayesian, ma2015complete} for approximate sampling from local posteriors on the clients, so that these samples can be used to estimate $\hat \muv_i$'s and $\hat \Sigmav_i$'s.
Specifically, we use a variant of SG-MCMC\footnote{%
While in this work we use a variant of SG-MCMC, other techniques such as HMC~\citep{neal2011mcmc} or NUTs~\citep{hoffman2014no} can be used, too. We leave analysis of other approaches to future work.}
with iterate averaging~\citep[IASG,][]{mandt2017stochastic}, which involves: (a) running local SGD for some number of steps to mix in the Markov chain, then (b) continued running of SGD for more steps to periodically produce samples via Polyak averaging~\citep{polyak1992acceleration} of the intermediate iterates (\cref{alg:iasg-sampling}).
The more computation we can run locally on the clients each round, the more posterior samples can be produced, resulting in better estimates of the local moments.

\vspace{-1.5ex}
\paragraph{(3) Efficient computation of the deltas.}
Even if we can obtain samples $\{\hat\thetav_1, \ldots, \hat\thetav_{\ell}\}$ via MCMC and use them to estimate local moments, $\hat \muv_i$ and $\hat \Sigmav_i$, computing $\Deltav_i$ na\"{i}vely would still require inverting a $d \times d$ matrix, \ie, $\Oc(d^3)$ compute and $\Oc(d^2)$ memory.
The good news is that we are able to show that clients can compute $\Deltav_i$'s much more efficiently, in $\Oc(d)$ time and memory, using a dynamic programming algorithm and appropriate mean and covariance estimators.

\vspace{-1.5ex}
\begin{restatable}{theorem}{clientdeltadp}
\label{thm:client-delta-dp-algorithm}
Given $\ell$ approximate posterior samples $\{\hat\thetav_1, \ldots, \hat\thetav_{\ell}\}$, let $\hat \muv_\ell$ be the sample mean, $\hat \Sv_\ell$ be the sample covariance, and $\hat \Sigmav_\ell := \rho_\ell \Iv + (1 - \rho_\ell) \hat \Sv_\ell$ be a shrinkage estimator~\citep{ledoit2004well} of the covariance with $\rho_\ell := 1 / (1 + (\ell - 1) \rho)$ for some $\rho \in [0, +\infty)$.
Then, for any $\thetav$, we can compute $\hat \Deltav_\ell = \hat \Sigmav_\ell^{-1} (\thetav - \hat \muv_\ell)$ in $\Oc(\ell^2 d)$ time and using $\Oc(\ell d)$ memory.
\end{restatable}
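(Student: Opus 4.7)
The plan is to exploit the rank-$\ell$ structure of $\hat\Sigmav_\ell$ induced by the shrinkage estimator. Substituting $\rho_\ell = 1/(1+(\ell-1)\rho)$ into the definition of $\hat\Sigmav_\ell$ yields the algebraic identity $(1-\rho_\ell)/(\ell-1) = \rho\rho_\ell$, so that
\begin{equation*}
    \hat\Sigmav_\ell = \rho_\ell\bigl(\Iv + \rho \mathbf{M}_\ell\bigr), \qquad \mathbf{M}_\ell := (\ell-1)\hat\Sv_\ell = \sum_{i=1}^\ell (\hat\thetav_i - \hat\muv_\ell)(\hat\thetav_i - \hat\muv_\ell)^\top.
\end{equation*}
Welford's online recursion gives $\mathbf{M}_\ell = \mathbf{M}_{\ell-1} + \tfrac{\ell-1}{\ell}\mathbf{e}_\ell \mathbf{e}_\ell^\top$ with innovations $\mathbf{e}_\ell := \hat\thetav_\ell - \hat\muv_{\ell-1}$ and mean update $\hat\muv_\ell = \hat\muv_{\ell-1} + \mathbf{e}_\ell/\ell$. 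Stacking the scaled innovations as columns of $\mathbf{V}_\ell := [\sqrt{c_1}\mathbf{e}_1, \ldots, \sqrt{c_\ell}\mathbf{e}_\ell]$ with $c_i := \rho(i-1)/i$, we obtain the explicit factorization $\rho \mathbf{M}_\ell = \mathbf{V}_\ell \mathbf{V}_\ell^\top$ and hence $\hat\Sigmav_\ell = \rho_\ell(\Iv + \mathbf{V}_\ell \mathbf{V}_\ell^\top)$, a rank-$\ell$ perturbation of a scaled identity.

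Next I would apply the Woodbury identity,
\begin{equation*}
    \hat\Sigmav_\ell^{-1} = \rho_\ell^{-1}\bigl[\Iv - \mathbf{V}_\ell (\Iv_\ell + \mathbf{G}_\ell)^{-1} \mathbf{V}_\ell^\top\bigr], \qquad \mathbf{G}_\ell := \mathbf{V}_\ell^\top \mathbf{V}_\ell \in \Rb^{\ell \times \ell},
\end{equation*}
which is well-defined since $\Iv_\ell + \mathbf{G}_\ell \succ 0$. Computing $\hat\Deltav_\ell = \hat\Sigmav_\ell^{-1}(\thetav - \hat\muv_\ell)$ then reduces to four steps: (a) form the residual $\mathbf{r} := \thetav - \hat\muv_\ell$ in $\Oc(d)$; (b) compute $\mathbf{u} := \mathbf{V}_\ell^\top \mathbf{r} \in \Rb^\ell$ in $\Oc(\ell d)$; (c) Cholesky-solve the symmetric positive-definite system $(\Iv_\ell + \mathbf{G}_\ell)\mathbf{w} = \mathbf{u}$ in $\Oc(\ell^3)$; and (d) return $\rho_\ell^{-1}(\mathbf{r} - \mathbf{V}_\ell \mathbf{w})$ in $\Oc(\ell d)$.

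The dynamic-programming component is the incremental maintenance of $\hat\muv_\ell$, $\mathbf{V}_\ell$, and $\mathbf{G}_\ell$ as samples stream in. At step $\ell$, forming $\mathbf{e}_\ell$ and updating the running mean cost $\Oc(d)$; appending the new column of $\mathbf{V}_\ell$ costs $\Oc(d)$; and extending $\mathbf{G}_\ell$ by the new row/column $\mathbf{V}_{\ell-1}^\top(\sqrt{c_\ell}\mathbf{e}_\ell)$ costs $\Oc(\ell d)$. Summing gives cumulative time $\Oc(\ell^2 d)$, and the persistent state is $\mathbf{V}_\ell$ (the dominant $\Oc(\ell d)$ term) plus $\hat\muv_\ell$ and $\mathbf{G}_\ell$ (both absorbed into $\Oc(\ell d)$ once $\ell \leq d$, which is the only regime in which this approach is preferable to the dense one). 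I expect the main obstacle to be purely algebraic bookkeeping: verifying that the three $\ell$-dependent scaling factors $\rho_\ell$, $(1-\rho_\ell)$, and the Welford weight $(\ell-1)/\ell$ collapse into the clean form $\rho_\ell(\Iv + \mathbf{V}_\ell \mathbf{V}_\ell^\top)$, and that the streaming innovations generate exactly this factorization rather than one centered at $\hat\muv_\ell$. Once that collapse is verified, Woodbury and the online factor updates deliver the stated bounds immediately.
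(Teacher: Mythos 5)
Your proposal is correct, and the first half of it---rewriting the shrinkage estimator as $\hat\Sigmav_\ell = \rho_\ell(\Iv + \rho\,\mathbf{M}_\ell)$ with $\mathbf{M}_\ell = (\ell-1)\hat\Sv_\ell$ and using the Welford recursion $\mathbf{M}_\ell = \mathbf{M}_{\ell-1} + \tfrac{\ell-1}{\ell}\mathbf{e}_\ell\mathbf{e}_\ell^\top$ to expose a sum of rank-1 terms---is exactly the derivation in the paper's appendix (there written as $\tilde\Sigmav_t = \tilde\Sigmav_{t-1} + \gamma_t \uv_t\uv_t^\top$ with $\gamma_t = (t-1)\rho/t$). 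Where you diverge is the linear-algebra step: the paper never forms the stacked factor $\mathbf{V}_\ell$ or the Gram matrix $\mathbf{G}_\ell$; instead it applies the Sherman--Morrison formula one rank-1 update at a time and maintains, via a two-index dynamic program, the running solutions $\tilde\Sigmav_{t}^{-1}(\thetav-\bar\xv_t)$ and $\tilde\Sigmav_{t-1}^{-1}\uv_t$ as explicit vectors. The payoff of the paper's route is an \emph{any-time} algorithm: after every new posterior sample the current $\hat\Deltav_t$ is already available at no extra cost, and the only stored state is $\Oc(\ell)$ vectors of length $d$ plus $\Oc(\ell)$ scalars. Your batch-Woodbury route is cleaner to state and arguably more numerically robust (a single symmetric positive-definite $\ell\times\ell$ solve rather than a chain of $\ell$ rank-1 inverse updates), and it can be made streaming by appending rows to the Cholesky factor of $\Iv_\ell+\mathbf{G}_\ell$ at $\Oc(\ell^2)$ per step. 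One small caveat you already flag but should state as an explicit hypothesis: the $\Oc(\ell^3)$ Cholesky solve and the $\Oc(\ell^2)$ storage of $\mathbf{G}_\ell$ are only absorbed into the claimed $\Oc(\ell^2 d)$ time and $\Oc(\ell d)$ memory when $\ell \le d$; the paper's recursion avoids the cubic-in-$\ell$ term altogether, though in the intended regime ($\ell \ll d$) the distinction is immaterial.
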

\vspace{-2ex}
\begin{proof}[Sketch]
We give a constructive proof by designing an efficient algorithm for computing $\hat \Deltav_\ell$.
Our approach is based on two key ideas:
\begin{enumerate}[topsep=0pt,itemsep=0pt,leftmargin=20pt]
    \item We prove that the specified shrinkage estimator of the covariance has a recursive decomposition into rank-1 updates, \ie, $\hat \Sigmav_t = \hat \Sigmav_{t - 1} + c_t \cdot \xv_t^\top \xv_t$, where $c_t$ is a constant and $\xv_t$ is some vector.
    This allows us to leverage the Sherman-Morrison formula for computing the inverse of $\hat \Sigmav_\ell$.
    \item Further, we design a dynamic programming algorithm for computing $\hat \Deltav_\ell$ exactly without storing the covariance matrix or its inverse.
    Our algorithm is online and allows efficient updates of $\hat \Deltav_\ell$ as more posterior samples become available.
\end{enumerate}
See \cref{sec:client-delta-computation-dp} for the full proof and derivation of the algorithm.
\end{proof}

\begin{wraptable}{r}{0.48\textwidth}
    \vspace{-1.5ex}
    \centering
    \caption{Computational complexity of the client updates for methods that use 5 local epochs measured in milliseconds (\% denotes relative increase).}
    \vspace{-1.5ex}
    \small
    \begin{tabu} to \linewidth {X[0.48,r]|X[0.8,r]|X[0.4,r]X[0.6,r]|X[0.4,r]X[0.6,r]}
        \toprule
        \textbf{Dim}    & \textbf{$\hat \Deltav_\text{\FedAvg}$}  & \multicolumn{2}{c|}{\textbf{$\hat \Deltav_\ell$ (DP)}}    & \multicolumn{2}{c}{\textbf{$\hat \Deltav_\ell$ (exact)}}    \\
        \midrule
        100             & 72                & 91    & +26\%             & 82    & +12\%            \\
        1K              & 76                & 92    & +21\%             & 104   & +36\%           \\
        10K             & 80                & 93    & +16\%             & 797   & +896\%           \\
        100K            & 149               & 155   & +4\%              &       & ---                  \\
        \bottomrule
    \end{tabu}
    \label{tab:complexity}
    \vspace{-2ex}
\end{wraptable}

Note that the computational cost of $\hat \Deltav_\ell$ consists of two components:
(i) the cost of producing $\ell$ approximate local posterior samples using IASG and
(ii) the cost of solving a linear system using dynamic programming.
How much of an overhead does it add compared to simply running local SGD?
It turns out that in practical settings the overhead is almost negligible.
\cref{tab:complexity} shows the time it takes a client to compute the updates based on 5 local epochs (100 steps per epoch) using different algorithms (\FedAvg vs. our approach with exact or dynamic programming (DP) matrix inversion) on synthetic linear regressions.
As the dimensionality grows, computational complexity of DP-based estimation of $\hat \Deltav_\ell$ becomes nearly identical to \FedAvg, which indicates that the majority of the cost in practice would come from SGD steps rather than our dynamic programming procedure.

\paragraph{The final algorithm, discussion, and implications.}
Putting all the pieces together, we arrive at the \emph{federated posterior averaging} (\FedPA) algorithm for approximately computing the mode of the global posterior over multiple communication rounds.
Our algorithm is a variant of generalized federated optimization (\cref{alg:generalized-fed-opt}) with a new client update procedure (\cref{alg:fedpa-client-update}).
Importantly, this also implies that \FedAvg can be viewed as posterior inference algorithm that estimates $\hat \Sigmav$ with an identity and, as a result, obtains biased client deltas $\hat \Deltav_\text{\FedAvg} := \Iv (\thetav - \hat \muv)$.

In \cref{fig:illustration-2d} in the introduction, we demonstrate the differences in behavior between \FedAvg and \FedPA that stem from the differences in their client updates.
Biased client updates make \FedAvg converge to a suboptimal point; moreover, increasing local computation only pushes the fixed point further away from the global optimum.
On the other hand, \FedPA converges faster and to a better optimum, trading off bias for slightly more variance (becomes visible only closer to convergence).
We see that \FedPA also substantially benefits from more local computation (more local samples).

Since the main difference between \FedAvg and \FedPA is, in fact, the bias-variance trade off in the server gradient estimates (\cref{eq:server-objective-gradient}), we can view both methods as \emph{biased} SGD~\citep{ajalloeian2020analysis} and reason about their convergence rates as well as distances between their fixed points and correct global optima as functions of the gradient bias.
In \cref{sec:analysis}, we provide further details, discuss convergence, empirically quantify the bias and variance of the client updates for both methods, and analyse the effects of the sampling-based approximations on the behavior of \FedPA.

\section{Experiments}
\label{sec:experiments}

Using a suite of realistic benchmark tasks introduced by \citet{reddi2020adaptive}, we evaluate \FedPA against several competitive baselines: the best versions of \FedAvg with adaptive optimizers as well as \MIME~\citep{karimireddy2020mime}---a recently-proposed \FedAvg variant that also works with stateless clients, but uses control-variates and server-level statistics to mitigate convergence issues.

\begin{table}[H]
    \centering
    \caption{Statistics on the data and tasks.
    The number of examples per client are given with one standard deviation across the corresponding set of clients (denoted with $\pm$).
    See description of the tasks in the text.}
    \vspace{-1.5ex}
    \small
    \begin{tabu} to \linewidth {X[1.2,l]X[0.5,l]|X[1.0,r]|X[2.3,r]|X[3.6,r]}
        \toprule
        \textbf{Dataset}                    & \textbf{Task}     & \textbf{\# classes}   & \textbf{\# clients (train/test)}      & \textbf{\# examples p/ client (train/test)}  \\
        \midrule
        \emnist                             & CR                & 62                    & 3,400 / 3,400                         &  $198 \pm 77$ / $23 \pm 9$                        \\
        \cifar                              & IR                & 100                   & 500 / 100                             &  $100 \pm 0$ / $100 \pm 0$                        \\
        \multirow{2}{*}{\stackoverflow}     & LR                & 500                   & \multirow{2}{*}{342,477 / 204,088}    & \multirow{2}{*}{$397 \pm 1279$ / $81 \pm 301$}            \\
                                            & NWP               & 10,000                &                                       &                                                   \\
        \bottomrule
    \end{tabu}
    \label{tab:datasets}
\end{table}

\subsection{The Setup}
\label{sec:exp-setup}

\paragraph{Datasets and tasks.}
The four benchmark tasks are based on the following three datasets (\cref{tab:datasets}): EMNIST~\citep{cohen2017emnist}, CIFAR100~\citep{krizhevsky2009learning}, and StackOverflow~\citep{stackoverflow2016data}.
EMNIST (handwritten characters) and CIFAR100 (RGB images) are used for multi-class image classification tasks.  StackOverflow (text) is used for next-word prediction (also a multi-class classification task, historically denoted NWP) and tag prediction (a multi-label classification task, historically denoted LR because a logistic regression model is used).
EMNIST was partitioned by authors~\citep{caldas2018leaf}, CIFAR100 was partitioned randomly into 600 clients with a realistic heterogeneous structure~\citep{reddi2020adaptive}, and StackOverflow was partitioned by its unique users.
All datasets were preprocessed using the code provided by~\citet{reddi2020adaptive}.

\vspace{-2ex}
\paragraph{Methods and models.}
We use a generalized framework for federated optimization (\cref{alg:generalized-fed-opt}), which admits arbitrary adaptive server optimizers and expects clients to compute model deltas.
As a baseline, we use federated averaging with adaptive optimizers (or with momentum) on the server and refer to it as \FedAvg[1] or \FedAvg[M], which stands for 1 or multiple local epochs performed by clients at each round, respectively.\footnote{\citet{reddi2020adaptive} referred to federated averaging with adaptive server optimizers as \textsc{FedAdam}, \textsc{FedYogi}, etc. Instead, we select the best optimizer for each task and refer to the corresponding method simply as \FedAvg.}
The number of local epochs in the multi-epoch versions is a hyperparameter.
We use the same framework for federated posterior averaging and refer to it as \FedPA[M].
As our clients use IASG to produce approximate posterior samples, collecting a single sample per epoch is optimal~\citep{mandt2017stochastic}.
Thus \FedPA[M] uses \textsc{M} samples to estimate client deltas and has the same local and global computational complexity as \FedAvg[M] but with
two extra hyperparameters: the number of burn-in rounds and the shrinkage coefficient $\rho$ from \cref{thm:client-delta-dp-algorithm}.
As in \citet{reddi2020adaptive}, we use the following model architectures for each task: CNN for \emnist, ResNet-18 for \cifar, LSTM for \stackoverflow NWP, and multi-label logistic regression on bag-of-words vectors for \stackoverflow LR (for details see \cref{app:exp-details}).

\begin{figure}[t]
    \centering
    \vspace{-1ex}
    \begin{subfigure}[b]{\linewidth}
        \caption{\cifar: Evaluation loss (left) and accuracy (right) for \textcolor{google-red}{\FedAvg[M]} and \textcolor{google-blue}{\FedPA[M]}.}
        \label{fig:learning-curves-cifar100}
        \vspace{-1ex}
        \includegraphics[width=0.495\linewidth]{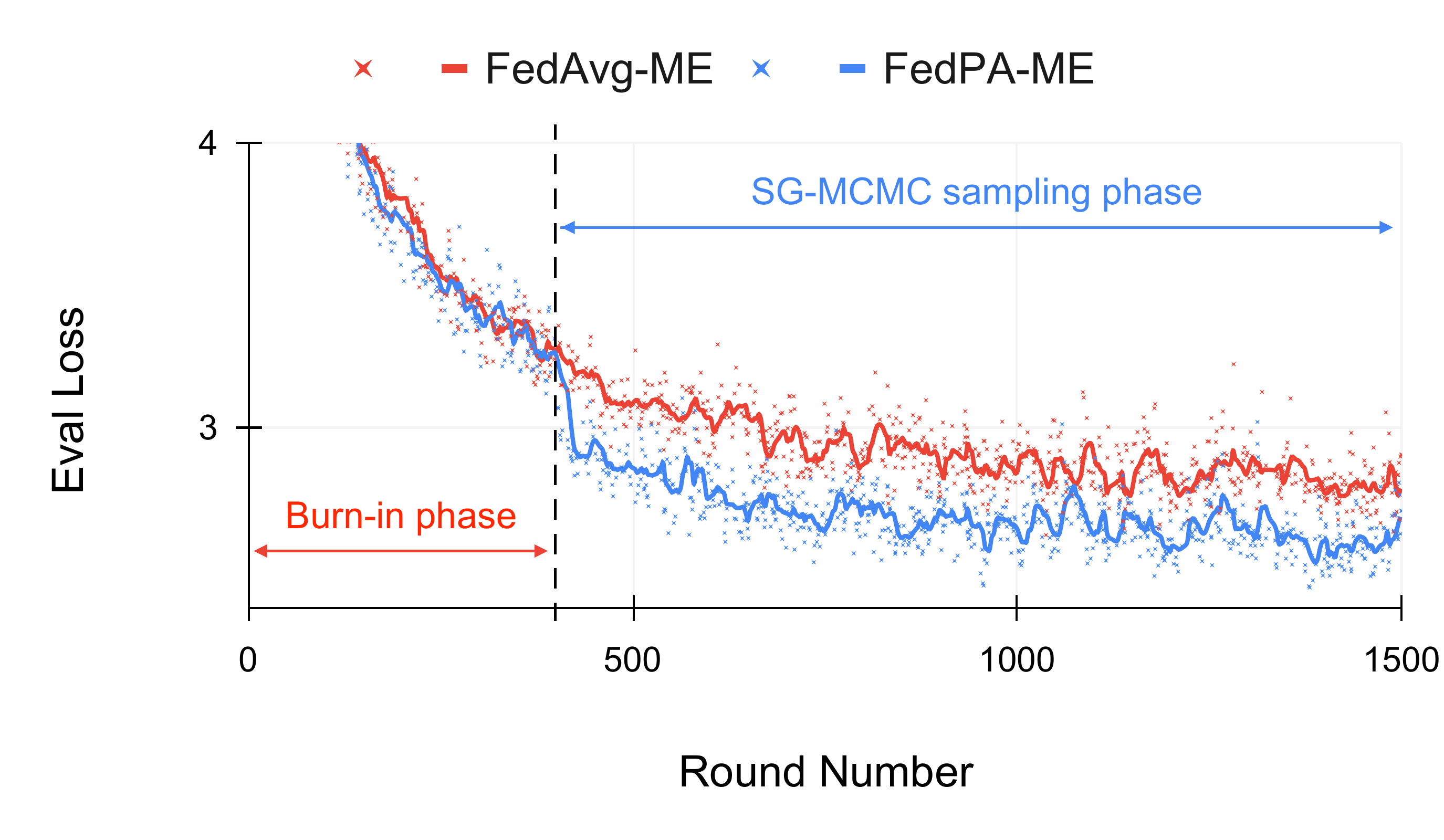}%
        \includegraphics[width=0.495\linewidth]{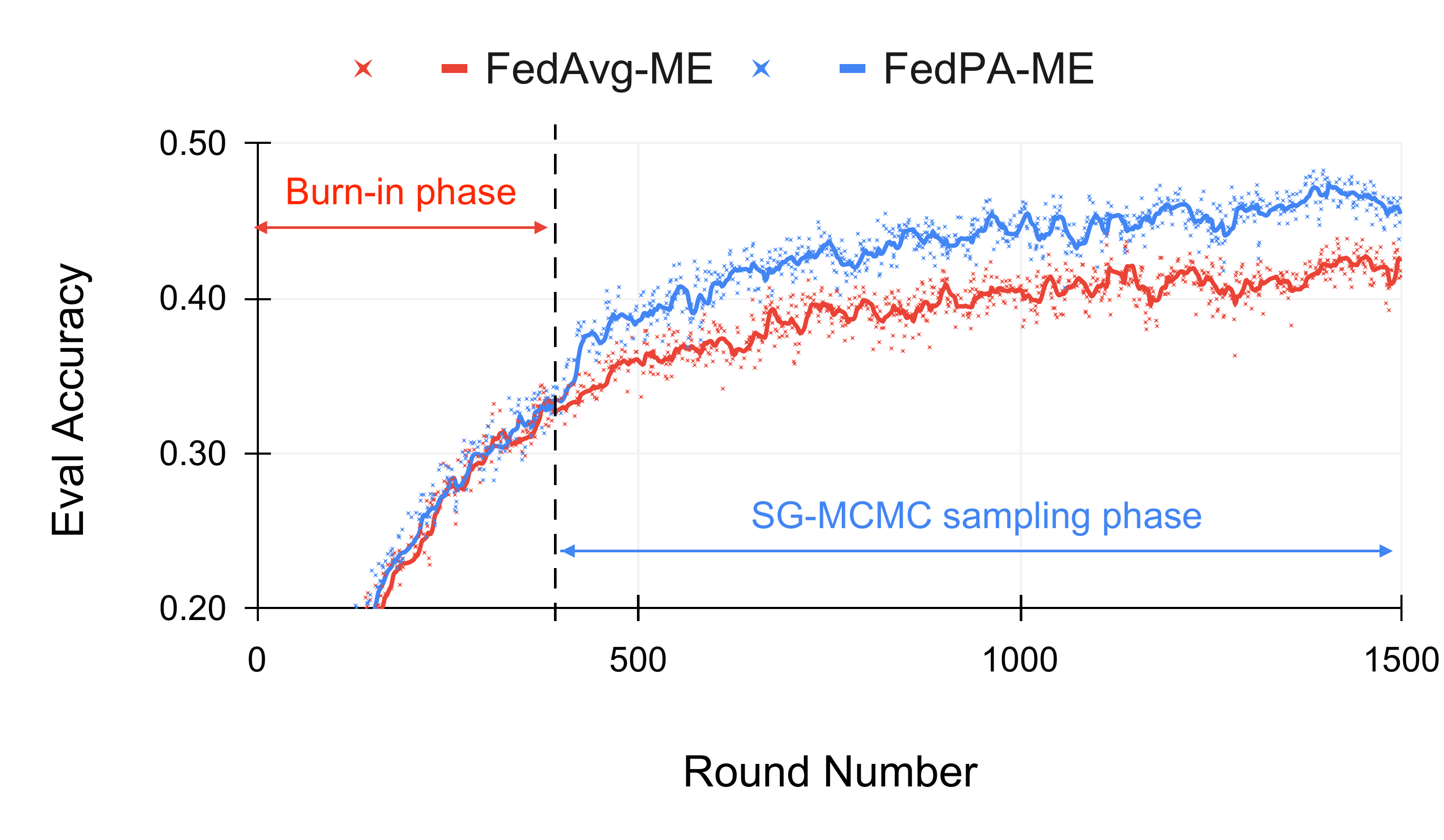}
    \end{subfigure}
    \begin{subfigure}[b]{\linewidth}
        \caption{\stackoverflow LR: Evaluation loss (left) and macro-F1 (right) for \textcolor{google-red}{\FedAvg[M]} and \textcolor{google-blue}{\FedPA[M]}.}
        \label{fig:learning-curves-so-lr}
        \vspace{-1ex}
        \includegraphics[width=0.495\linewidth]{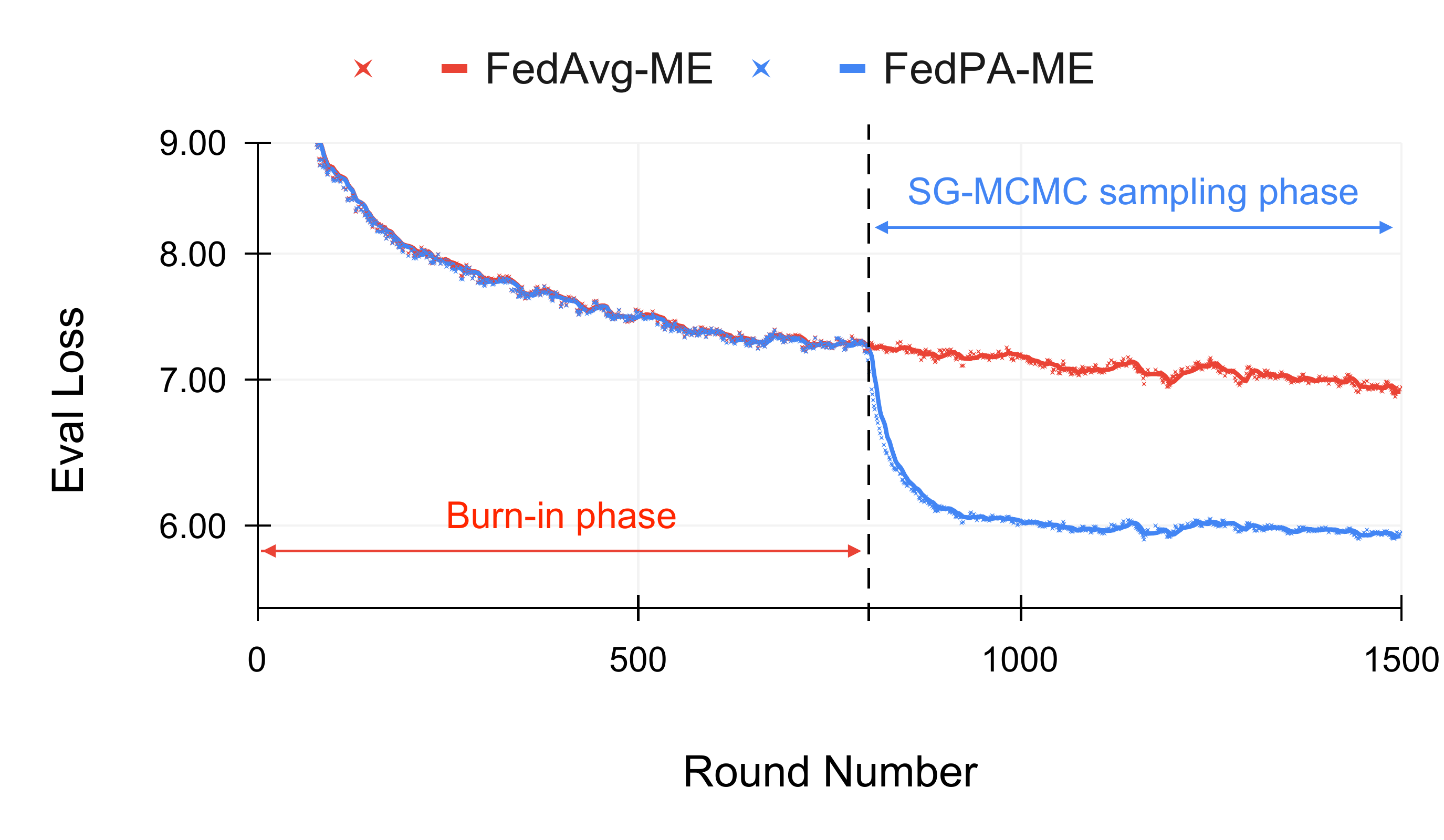}%
        \includegraphics[width=0.495\linewidth]{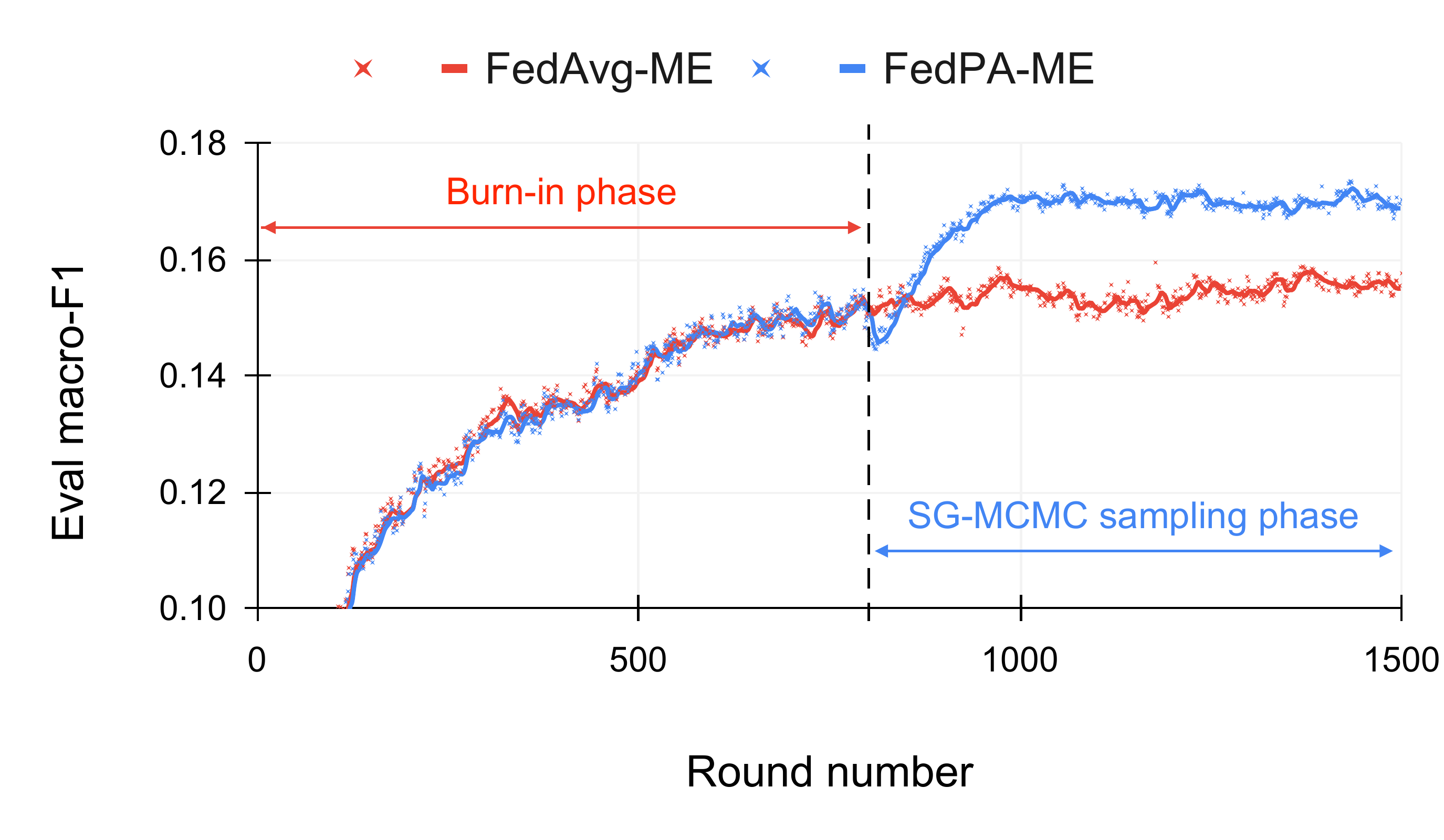}
        \vspace{-1.5ex}
    \end{subfigure}
    \caption{%
    Evaluation metrics for \FedAvg and \FedPA computed at each training round on (a) \cifar and (b) \stackoverflow~LR.
    During the initial rounds (the ``burn-in phase''), \FedPA computes deltas the same way as \FedAvg;
    after that, \FedPA computes deltas using \cref{alg:fedpa-client-update} and approximate posterior samples.}
    \label{fig:learning-curves}
    \vspace{-2.5ex}
\end{figure}

\vspace{-2ex}
\paragraph{Hyperparameters.}
For hyperparameter tuning, we first ran small grid searches for \FedAvg[M] using the best server optimizer and corresponding learning rate grids from \citet{reddi2020adaptive}.
Then, we used the best \FedAvg[M] configuration and did a small grid search to tune the additional hyperparameters of \FedPA[M], which turned out not to be very sensitive (\ie, many configurations provided results superior to \FedAvg).
More hyperparameter details can be found in \cref{app:exp-details}.

\vspace{-2ex}
\paragraph{Metrics.}
Since both speed of learning as well as final performance are important quantities for federated learning, we measure:
(i) the number of rounds it takes the algorithm to attain a desired level of an evaluation metric and
(ii) the best performance attained within a specified number of rounds.
For \emnist, we measure the number of rounds it takes different methods to achieve 84\% and 86\% evaluation accuracy\footnote{Centralized optimization of the CNN model on \emnist attains the evaluation accuracy of 88\%.}, and the best validation accuracy attained within 500 and 1500 rounds.
For \cifar, we use the same metrics but use 30\% and 40\% as evaluation accuracy cutoffs and 1000 and 1500 as round number cutoffs.
Finally, for \stackoverflow, we measure the the number of rounds it takes to the best performance and evaluation accuracy (for the NWP task) and precision, recall at 5, macro- and micro-F1 (for the LR task) attained by round 1500.
We note that the total number of rounds was selected based on computational considerations (to ensure reproducibility within a reasonable amount of computational cost) and the intermediate cutoffs were selected qualitatively to highlight some performance points of interest.
In addition, we provide plots of the evaluation loss and other metrics for all methods over the course of training which show a much fuller picture of the behavior of the algorithms (most of the plots are given in \cref{app:exp-additional-results}).

\vspace{-2ex}
\paragraph{Implementation and reproducibility.}
All our experiments on the benchmark tasks were conducted in simulation using TensorFlow Federated~\citep[TFF,][]{ingerman2019tff}.
Synthetic experiments were conducted using JAX~\citep{jax2018github}.
The JAX implementation of the algorithms is available at \url{https://github.com/alshedivat/fedpa}.
The TFF implementation will be released through \url{https://github.com/google-research/federated}.

\subsection{Results on Benchmark Tasks}
\label{sec:exp-benchmarks}

\begin{table}[t]
    \vspace{-1ex}
    \caption{Comparison of \FedPA with baselines. %
    All metrics were computed on the evaluation sets and averaged over the last 100 rounds before the round limit was reached.
    The ``number of rounds to accuracy'' was determined based on the 10-round running average crossing the threshold for the first time.
    The arrows indicate whether higher ($\uparrow$) or lower ($\downarrow$) is better.
    The best performance in each column is denoted in \best{bold}.}
    \makebox[\linewidth]{
    \begin{minipage}[t]{1.1\textwidth}
    \small
    \begin{subtable}[t]{0.495\linewidth}
        \caption{\emnist}
        \begin{tabu} to \linewidth {X[2.0]|X[r]X[r]|X[0.8,r]X[0.8,r]}
            \toprule
                                                    & \multicolumn{2}{c|}{\textbf{accuracy (\%, $\uparrow$)}}   & \multicolumn{2}{c}{\textbf{rounds (\#, $\downarrow$)}}    \\
            \textbf{Method \textbackslash @}        & 500R          & 1500R                                     & 84\%              & 86\%                                  \\
            \midrule
            \AFO$^\dagger$                          & 80.4          & 86.8                                      & 546               & 1291                                  \\
            \MIME$^\ddagger$                         & 83.1         & *84.9                                     & 464               & *---                                  \\
            \midrule
            \FedAvg[1]                              & 83.9          & 86.5                                      & 451               & 1360                                  \\
            \FedAvg[M]                              & 85.8          & 85.9                                      & 86                & ---                                   \\
            \FedPA[M]                               & \best{86.5}   & \best{87.3}                               & \best{84}         & \best{92}                             \\
            \bottomrule
        \end{tabu}
        \label{tab:results-emnist}
        \vspace{1ex}
    \end{subtable}
    \begin{subtable}[t]{0.495\linewidth}
        \caption{\cifar}
        \begin{tabu} to \linewidth {X[2.0]|X[r]X[r]|X[0.8,r]X[0.8,r]}
            \toprule
                                                    & \multicolumn{2}{c|}{\textbf{accuracy (\%, $\uparrow$)}}   & \multicolumn{2}{c}{\textbf{rounds (\#, $\downarrow$)}}    \\
            \textbf{Method \textbackslash @}        & 1000R         & 1500R                                     & 30\%              & 40\%                                  \\
            \midrule
            \AFO$^\dagger$                          & 31.9          & 41.1                                      & 898               & 1401                                  \\
            \MIME$^\ddagger$                        & 33.2          & *33.9                                     & 680               & *---                                  \\
            \midrule
            \FedAvg[1]                              & 24.2          & 31.7                                      & 1379              & ---                                   \\
            \FedAvg[M]                              & 40.2          & 42.1                                      & \best{348}        & 896                                   \\
            \FedPA[M]                               & \best{44.3}   & \best{46.3}                               & \best{348}        & \best{543}                            \\
            \bottomrule
        \end{tabu}
        \label{tab:results-cifar}
        \vspace{1ex}
    \end{subtable}
    \begin{subtable}[t]{\linewidth}
        \caption{\stackoverflow}
        \begin{tabu} to \linewidth {X[1.4]|X[1.25,r]X[1.1,r]|X[0.6,r]X[0.6,r]X[0.5,r]X[0.5,r]}
            \toprule
                                                        & \multicolumn{2}{c|}{\textbf{NWP}}                                             & \multicolumn{4}{c}{\textbf{LR (all metrics in \%, $\uparrow$)}}                   \\
            \textbf{Method \,\textbackslash\, Metric}   & \textbf{accuracy (\%, $\uparrow$)}    & \textbf{rounds (\#, $\downarrow$)}    & \textbf{precision}    & \textbf{recall@5} & \textbf{ma-F1}    & \textbf{mi-F1}    \\
            \midrule
            \AFO$^\dagger$                              & \best{23.4}                           & 1049                                  & ---                   & 68.0              & ---               & ---               \\
            \midrule
            \FedAvg[1]                                  & 22.8                                  & 1074                                  & 74.58                 & \best{69.1}       & 14.9              & 43.8              \\
            \FedAvg[M]                                  & 23.0                                  & 870                                   & \best{78.65}          & 68.7              & 15.6              & 43.3              \\
            \FedPA[M]                                   & \best{23.4}                           & \best{805}                            & 72.8                  & 68.6              & \best{17.3}       & \best{44.0}       \\
            \bottomrule
        \end{tabu}
        \label{tab:results-so}
    \end{subtable}
    \label{tab:results}
    \\[0.1ex]$^\dagger$ the best results taken from \citep{reddi2020adaptive}. $^\ddagger$ the best results taken from \citep{karimireddy2020mime}.\\
    * results were only available for the method trained to 1000 rounds.
    \end{minipage}
    }
\end{table}

\paragraph{The effects of posterior correction of client deltas.}
As we demonstrated in \cref{sec:federated-posterior-averaging}, \FedPA essentially generalizes \FedAvg and only differs in the computation done on the clients, where we compute client deltas using an estimator of the local posterior inverse covariance matrix, $\Sigmav_i^{-1}$, which requires sampling from the posterior.
To be able to use SG-MCMC for local sampling, we first run \FedPA in the \emph{burn-in regime} (which is identical to \FedAvg) for a number of rounds to bring the server state closer to the clients' local optima,\footnote{If SGD cannot reach the vicinity of clients' local optima within the specified number of local steps or epochs, estimated local means and covariances based on the SGD iterates can be arbitrarily poor.} after which we ``turn on'' the local posterior sampling.
The effect of switching from \FedAvg to \FedPA for \cifar (after 400 burn-in rounds) and \stackoverflow LR (after 800 burn-in rounds) is presented on \cref{fig:learning-curves-cifar100,fig:learning-curves-so-lr}, respectively.\footnote{The number of burn-in rounds is a hyperparamter and was selected for each task to maximize performance.\\See more details in \cref{app:exp-details}.}
During the burn-in phase, evaluation performance is identical for both methods, but once \FedPA starts computing client deltas using local posterior samples, the loss immediately drops and the convergence trajectory changes, indicating that \FedPA is able to avoid stagnation and make progress towards a better optimum.
Similar effects are observed across all other tasks (see \cref{app:exp-additional-results}).\footnote{%
We note that running burn-in for a fixed number of rounds before switching to sampling was a design choice; other, more adaptive strategies for determining when to switch from burn-in to sampling are certainly possible (\eg, use local loss values to determine when to start sampling). We leave such alternatives as future work.}

While the improvement of \FedPA over \FedAvg on some of the tasks is visually apparent (\cref{fig:learning-curves}), we provide a more detailed comparison of the methods in terms of the speed of learning and the attained performance on all four benchmark tasks, summarized in \cref{tab:results} and discussed below.

\vspace{-1.5ex}
\paragraph{Results on \emnist and \cifar.}
In \cref{tab:results-emnist,tab:results-cifar}, we present a comparison of \FedPA against: tuned \FedAvg with a fixed client learning rate (denoted \FedAvg[1] and \FedAvg[M]), the best variation of adaptive \FedAvg from \citet{reddi2020adaptive} with exponentially decaying client learning rates (denoted \AFO), and \MIME of \citet{karimireddy2020mime}.
With more local epochs, we see significant improvement in terms of speed of learning: both \FedPA[M] and \FedAvg[M] achieve 84\% accuracy on \emnist in under 100 rounds (similarly, both methods attain 30\% on \cifar by round 350).
However, more local computation eventually hurts \FedAvg leading to worse optima: on \emnist, \FedAvg[M] is not able to consistently achieve 86\% accuracy within 1500 rounds; on \cifar, it takes extra 350 rounds for \FedAvg[M] to get to 40\% accuracy.

Finally, federated posterior averaging achieves the best performance on both tasks in terms of evaluation accuracy within the specified limit on the number of training rounds.
On \emnist in particular, the final performance of \FedPA[M] after 1500 training rounds is 87.3\%, which, while only a 0.5\% absolute improvement, bridges \textbf{41.7\%} of the gap between the centralized model accuracy (88\%) and the best federated accuracy from previous work~\citep[86.8\%,][]{reddi2020adaptive}.

\vspace{-1.5ex}
\paragraph{Results on \stackoverflow NWP and LR.}
Results for \stackoverflow are presented in \cref{tab:results-so}.
Although not as pronounced as for image datasets, we observe some improvement of \FedPA over \FedAvg here as well.  For NWP, we have an accuracy gain of 0.4\% over the best baseline.
For the LR task, we compare methods in terms of average precision, recall at 5, and macro-/micro-F1.  The first two metrics have appeared in some prior FL work, while the latter two are the primary evaluation metrics typically used in multi-label classification work~\citep{gibaja2015tutorial}.
Interestingly, while \FedPA underperforms in terms of precision and recall, it substantially outperforms in terms of micro- and macro-averaged F1, especially the macro-F1.
This indicates that while \FedAvg learns a model that can better predict high-frequency labels, \FedPA learns a model that better captures rare labels~\citep{yang1999evaluation, yang1999re}.
Interestingly, note while \FedPA improves on F1 metrics and has almost the same recall at 5, it's precision after 1500 rounds is worse than \FedAvg.
A more detailed discussion along with training curves for each evaluation metric are provided in \cref{app:exp-additional-results}.

\section{Conclusion and Future Directions}
\label{sec:conclusion}

In this work, we presented a new perspective on federated learning based on the idea of global posterior inference via averaging of local posteriors.
Applying this perspective, we designed a new algorithm that generalizes federated averaging, is similarly practical and efficient, and yields state-of-the-art results on multiple challenging benchmarks.
While our algorithm required a number of specific approximation and design choices, we believe that the underlying approach has potential to significantly broaden the design space for FL algorithms beyond purely optimization techniques.

\vspace{-1.5ex}
\paragraph{Limitations and future work.}
As we mentioned throughout the paper, our method has a number of limitations due to the design choices, such as specific posterior sampling and covariance estimation techniques.
While in the appendix we analyzed the effects of some of these design choices, exploration of: (i) other sampling strategies, (ii) more efficient covariance estimators~\citep{hsieh2013big}, (iii) alternatives to MCMC (such as variational inference), and (iv) more general connections with Bayesian deep learning are all interesting directions to pursue next.
Finally, while there is a known, interesting connection between posterior sampling and differential privacy~\citep{wang2015privacy}, better understanding of privacy implications of posterior inference in federated settings is an open question.

\subsection*{Acknowledgments}
The authors would like to thank Zachary Charles for the invaluable feedback that influenced the design of the methods and experiments, and Brendan McMahan, Zachary Garrett, Sean Augenstein, Jakub Konečný, Daniel Ramage, Sanjiv Kumar, Sashank Reddi, Jean-François Kagy for many insightful discussions, and Willie Neiswanger for helpful comments on the early drafts.

\bibliographystyle{iclr2021_conference}
\bibliography{references}

\clearpage
\appendix
\section{Preliminary Analysis and Ablations}
\label{sec:analysis}

In \cref{sec:federated-posterior-averaging}, we derived federated posterior averaging (\FedPA) starting with the global posterior decomposition (\cref{prop:posterior-decomposition}, which is exact) and applying the following three approximations:
\begin{enumerate}[topsep=0pt,itemsep=0pt,leftmargin=20pt]
    \item The Laplace approximation of the local and global posterior distributions.
    \item The shrinkage estimation of the local moments.
    \item Approximate sampling from the local posteriors using MCMC.
\end{enumerate}
We have also observed that \FedAvg is a special case of \FedPA (from the algorithmic point of view), since it can be viewed as also using the Laplace approximation for the posteriors, but estimating local covariances $\hat \Sigmav_i$'s with identities and local means using the final iterates of local SGD.

In this section, we analyze the effects of approximations 2 and 3 on the convergence of \FedPA.
Specifically, we first discuss the convergence rates of \FedAvg and \FedPA as \emph{biased} stochastic gradient optimization methods~\citep{ajalloeian2020analysis}.
We show how the bias and variance of the client deltas behave for \FedAvg and \FedPA as functions of the number samples.
We also analyze the quality of samples produced by IASG~\citep{mandt2017stochastic} and how they depend on the amount of local computation and hyperparameters.
Our analyses are conducted empirically.

\subsection{Discussion of the Convergence of \FedPA vs. \FedAvg}
\label{app:fedpa-exact-sampling-convergence}

First, observe that if each client is able to perfectly estimate their $\Deltav_i = \Sigmav_i^{-1} (\thetav - \muv)$, the problem solved by \cref{alg:generalized-fed-opt} simply becomes an optimization of a quadratic objective using unbiased stochastic gradients, $\Deltav := \frac{1}{M} \sum_{i=1}^M \Deltav_i$.
The noise in the gradients in this case comes from the fact that the server interacts with only a small subset of $M$ out of $N$ clients in each round.
This is a classical stochastic optimization problem with well-known convergence rates under some assumptions on the norm of the stochastic gradients~\citep[\eg,][]{nemirovski2009robust}.
The rate of convergence for SGD with a $\Oc(t^{-1})$ decaying learning rate used on the server is $\Oc(1 / \sqrt{t})$.
It can be further improved to $\Oc(1 / t)$ using Polyak momentum~\citep{polyak1964momentum} or iterate averaging~\citep{polyak1992acceleration}.

In reality, both \FedAvg and \FedPA produce biased estimates $\hat \Deltav_\text{\FedAvg}$ and $\hat \Deltav_\text{\FedPA}$, respectively.
Thus, we can analyze the problem as SGD with biased stochastic gradient estimates and let $\hat \Deltav_t := \nabla F(\thetav_t) + \bv(\thetav_t) + \nv(\thetav_t)$ where $\bv(\thetav_t)$ and $\nv(\thetav_t, \xi)$ are bias and noise terms.
Following \citet{ajalloeian2020analysis}, we can further assume that the bias and noise terms are norm-bounded as follows.

\vspace{-1ex}
\begin{assumption}[$(m, \zeta^2)$-bounded bias]
\label{asm:bounded-bias}
There exist constants $0 \leq m < 1$ and $\zeta^2 \geq 0$ such that
\begin{equation}
\|\bv(\thetav)\|^2 \leq m \|\nabla F(\thetav)\|^2 + \zeta^2, \quad \forall \thetav \in \Rb^d.
\end{equation}
\end{assumption}

\vspace{-1.5ex}
\begin{assumption}[$(M, \sigma^2)$-bounded noise]
\label{asm:bounded-noise}
There exist constants $0 \leq M < 1$ and $\sigma^2 \geq 0$ such that 
\begin{equation}
\ep[\xi]{\|\nv(\thetav, \xi)\|^2} \leq M \|\nabla F(\thetav)\|^2 + \sigma^2, \quad \forall \thetav \in \Rb^d.
\end{equation}
\end{assumption}
\vspace{-1ex}

Under these general assumptions, the following convergence result holds.

\vspace{-1ex}
\begin{theorem}[\citet{ajalloeian2020analysis}, Theorem 2]
\label{thm:biased-sgd-convergence}
Let $F(\thetav)$ be $L$-smooth.
Then SGD with a learning rate $\alpha := \min\left\{\frac{1}{L}, \frac{1 - m}{2ML}, \left(\frac{LF}{\sigma^2 T}\right)^{1/2}\right\}$ and gradients that satisfy Assumptions \ref{asm:bounded-bias}, \ref{asm:bounded-noise} achieves the vicinity of a stationary point, $\ep{\|\nabla F(\thetav)\|^2} = \Oc\left(\varepsilon + \frac{\zeta^2}{1 - m}\right)$, in $T$ iterations, where
\begin{equation}
    T = \Oc\left(\frac{1}{\varepsilon} \left[ 1 + \frac{M}{1 - m} + \frac{\sigma^2}{\varepsilon (1 - m)} \right] \right) \frac{LF}{1 - m}.
\end{equation}
\end{theorem}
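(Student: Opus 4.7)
The plan is to follow the standard template for analyzing biased SGD via the descent lemma, since the paper attributes the statement to \citet{ajalloeian2020analysis}. By $L$-smoothness, the one-step progress satisfies
\begin{equation}
F(\thetav_{t+1}) \leq F(\thetav_t) - \alpha \langle \nabla F(\thetav_t), \hat\Deltav_t\rangle + \tfrac{L\alpha^2}{2} \|\hat\Deltav_t\|^2.
\end{equation}
Substituting the decomposition $\hat\Deltav_t = \nabla F(\thetav_t) + \bv(\thetav_t) + \nv(\thetav_t, \xi)$ and taking conditional expectation over the zero-mean noise $\nv$ eliminates the cross terms involving $\nv$ and leaves a bound depending on $\|\nabla F(\thetav_t)\|^2$, $\langle \nabla F(\thetav_t), \bv(\thetav_t)\rangle$, $\|\bv(\thetav_t)\|^2$, and $\ep[\xi]{\|\nv(\thetav_t,\xi)\|^2}$.

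Next, I would dispose of the bias inner product with Young's inequality, $|\langle \nabla F, \bv\rangle| \leq \tfrac12 \|\nabla F\|^2 + \tfrac12 \|\bv\|^2$, and expand $\|\hat\Deltav_t\|^2$ via the three-term Young bound $\|\hat\Deltav_t\|^2 \leq 3(\|\nabla F\|^2 + \|\bv\|^2 + \|\nv\|^2)$. Applying \cref{asm:bounded-bias} to replace $\|\bv\|^2$ by $m\|\nabla F\|^2 + \zeta^2$ and \cref{asm:bounded-noise} to replace $\ep{\|\nv\|^2}$ by $M\|\nabla F\|^2 + \sigma^2$ produces, after collecting terms, an inequality of the form
\begin{equation}
\ep{F(\thetav_{t+1})} \leq \ep{F(\thetav_t)} - \alpha\, c_1(\alpha, L, m, M)\, \ep{\|\nabla F(\thetav_t)\|^2} + \alpha\, c_2(\alpha, L)\,(\zeta^2 + \sigma^2),
\end{equation}
where $c_1$ is guaranteed positive and of order $(1-m)$ precisely because the step-size caps $\alpha \leq 1/L$ and $\alpha \leq (1-m)/(2ML)$ neutralize the $L\alpha^2 \cdot M$ and $L\alpha^2 \cdot m$ contributions coming from $\|\hat\Deltav_t\|^2$.

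The third step is to telescope from $t = 0$ to $T-1$ and divide by $T$, which yields
\begin{equation}
\frac{1}{T} \sum_{t=0}^{T-1} \ep{\|\nabla F(\thetav_t)\|^2} \;\leq\; \Oc\!\left( \frac{F(\thetav_0) - F^\star}{\alpha T (1-m)} + \frac{L \alpha \sigma^2}{1-m} + \frac{\zeta^2}{1-m} \right).
\end{equation}
The third term is the irreducible bias floor, while the first two are balanced by the choice $\alpha \propto (LF/(\sigma^2 T))^{1/2}$, giving an $\Oc(\sqrt{LF\sigma^2/T}/(1-m))$ optimization rate on top of the floor. Demanding that this rate be of order $\varepsilon$ and inverting for $T$ reproduces the stated iteration complexity: the constant-in-$T$ bound $1/L$ contributes the $1$ inside the bracket, the cap $(1-m)/(2ML)$ contributes the $M/(1-m)$ term, and the noise-optimal cap contributes the $\sigma^2/(\varepsilon(1-m))$ term, all scaled by the common $LF/(1-m)$ factor.

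The main obstacle is bookkeeping rather than any deep idea: the Young constants have to be chosen so that after absorbing the $m\|\nabla F\|^2$ from the bias bound and the $M\|\nabla F\|^2$ from the $L\alpha^2\|\nv\|^2$ term into the descent inequality, a strictly positive coefficient of order $(1-m)$ remains on $\|\nabla F(\thetav_t)\|^2$. A looser choice would either wipe out the gradient coefficient entirely or inflate the residual $\zeta^2$ beyond the advertised $\zeta^2/(1-m)$ scaling. A secondary subtlety is verifying that each of the three step-size caps in the definition of $\alpha$ is the binding constraint in its respective regime (deterministic, multiplicative-noise-dominated, and additive-noise-dominated), so that the final complexity cleanly decomposes into the three additive terms inside the bracket.
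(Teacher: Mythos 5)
The paper does not prove this theorem at all: it is imported verbatim from \citet{ajalloeian2020analysis} (their Theorem~2) and used as a black box to discuss the fixed points of \FedAvg{} versus \FedPA, so there is no in-paper proof to compare against. Your sketch is a faithful reconstruction of the standard biased-SGD argument that the cited reference actually uses---descent lemma, Young's inequality on the bias cross term, absorption of the $m\|\nabla F\|^2$ and $M\|\nabla F\|^2$ contributions via the step-size caps, telescoping, and balancing the $\Oc(1/(\alpha T))$ and $\Oc(\alpha\sigma^2)$ terms---and it correctly identifies $\zeta^2/(1-m)$ as the irreducible floor. One small simplification: once you condition on the zero-mean noise, the cross terms vanish and $\ep{\|\hat\Deltav_t\|^2} = \|\nabla F + \bv\|^2 + \ep{\|\nv\|^2}$, so the two-term bound $\|\nabla F + \bv\|^2 \le 2\|\nabla F\|^2 + 2\|\bv\|^2$ suffices in place of your three-term Young bound; this only changes constants and does not affect the stated asymptotics.
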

\vspace{-1ex}

Note that SGD with biased gradients is able to converge to a vicinity of the optimum determined by the bias term $\zeta^2 / (1 - m)$.
For \FedAvg, since the bias is not countered, this term determines the distance between the stationary point and the true global optimum.
For \FedPA, since $\hat \Deltav_\text{\FedPA} \rightarrow \Deltav$ with more local samples, the bias should vanish as we increase the amount of local computation.

Determining the precise statistical dependence of the gradient bias on the local samples is beyond the scope of this work.
However, to gain more intuition about the differences in behavior of \FedPA and \FedAvg, below we conduct an empirical analysis of the bias and variance of the estimated client deltas on synthetic least squares problems, for which exact deltas can be computed analytically.

\begin{figure}[t]
    \centering
    \vspace{-1ex}
    \begin{subfigure}[b]{\linewidth}
        \caption{\FedAvg bias and variance as functions of the number of local steps.}
        \label{fig:fedavg-bias-variance}
        \includegraphics[width=\linewidth]{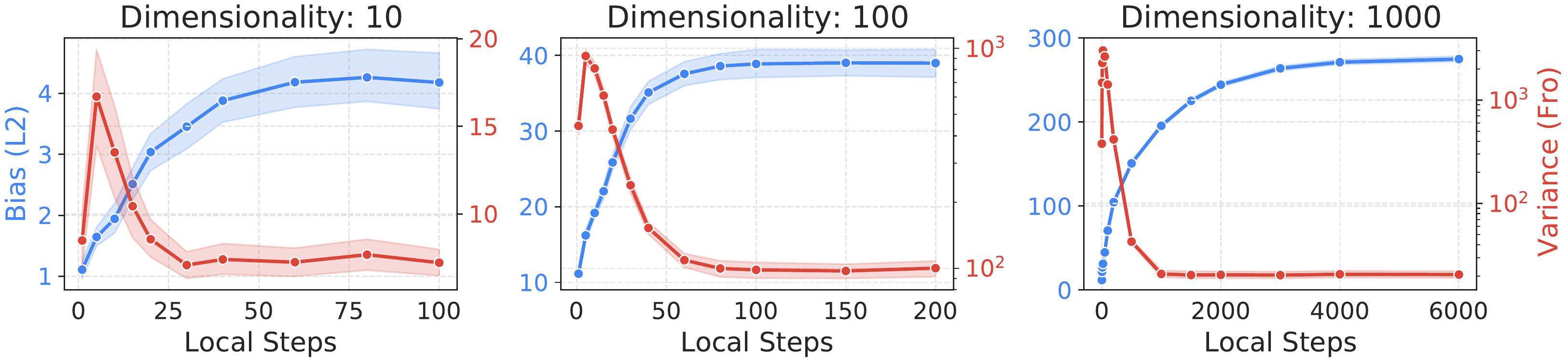}
        \vspace{-1ex}
    \end{subfigure}
    \begin{subfigure}[b]{\linewidth}
        \caption{\FedPA bias and variance as functions of the number of local steps.
        The burn-in steps were not included.
        For dimensionality 10, 100, and 1000, the shrinkage $\rho$ was fixed to 0.01, 0.005, and 0.001, respectively.}
        \label{fig:fedpa-bias-variance-steps}
        \includegraphics[width=\linewidth]{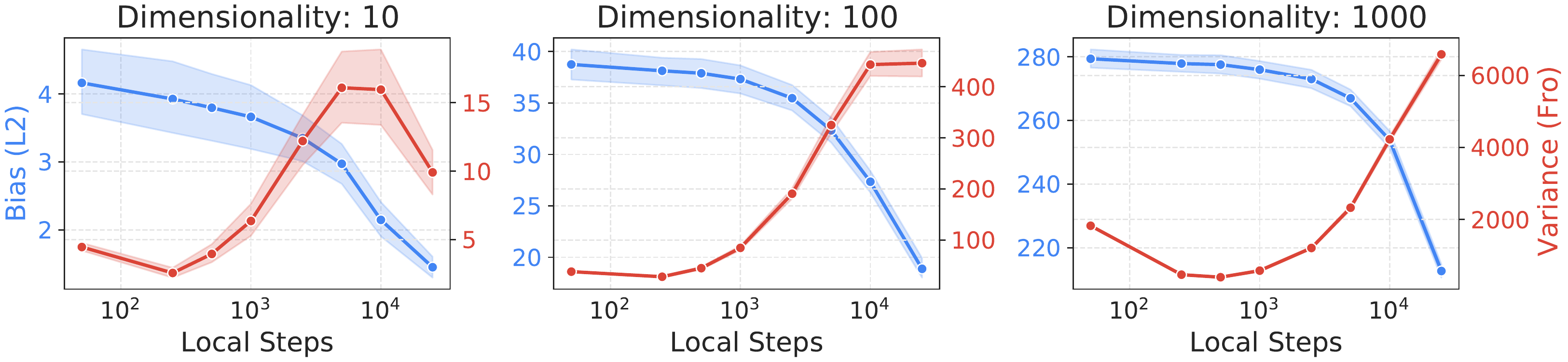}
        \vspace{-1ex}
    \end{subfigure}
    \begin{subfigure}[b]{\linewidth}
        \caption{\FedPA bias and variance as functions of the shrinkage parameter.
        For dimensionality 10, 100, and 1000, the number of local steps was fixed to 5,000, 10,000, and 50,000, respectively.}
        \label{fig:fedpa-bias-variance-rho}
        \includegraphics[width=\linewidth]{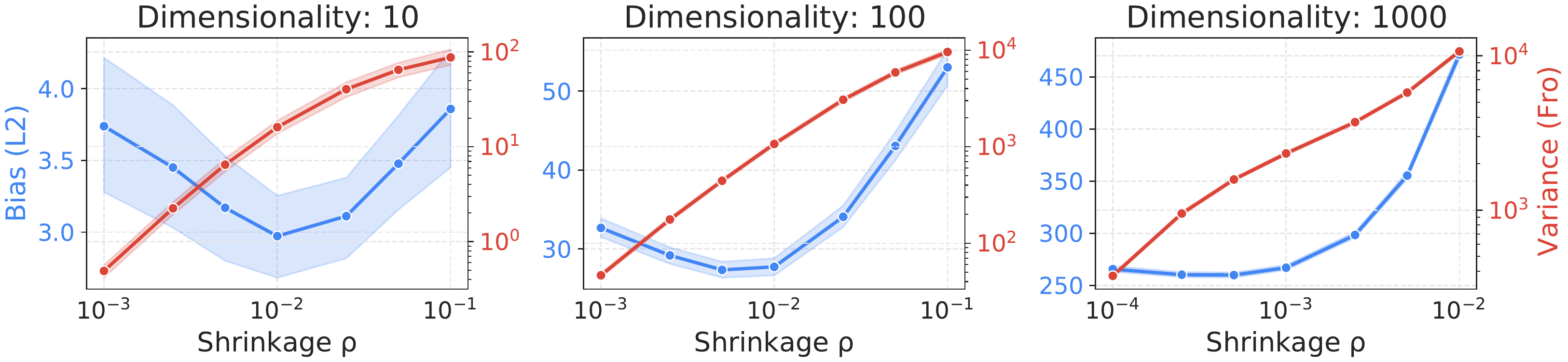}
    \end{subfigure}
    \caption{The \textcolor{google-blue}{bias} and \textcolor{google-red}{variance} tradeoffs for \FedAvg and \FedPA as functions of the estimation parameters.}
    \label{fig:bias-variance-tradeoffs}
    \vspace{-1ex}
\end{figure}

\paragraph{Quantifying empirically the bias and variance of $\hat \Deltav$ for \FedPA and \FedAvg.}
We measure the empirical bias and variance of the client deltas computed by each of the methods on the synthetic least squares linear regression problems generated according to~\citet{guyon2003design} using the \texttt{make\_regression} function from scikit-learn.\footnote{\url{https://scikit-learn.org/stable/modules/generated/sklearn.datasets.make_regression.html}}
The problems were generated as follows: for each dimensionality (10, 100, and 1000 features), we generated 10 random least squares problems, each of which consisted of 500 synthetic data points.
Next, for each of the problems we generated 10 random initial model parameters $\{\thetav_1, \dots, \thetav_{10}\}$ and for each of the parameters we computed the exact $\Deltav_i$ as well as $\hat \Deltav_{\text{\FedAvg}, i}$ and $\hat \Deltav_{\text{\FedPA}, i}$ for different numbers of local steps; for $\hat \Deltav_\text{\FedPA}$ we also varied the shrinkage hyperparameter.
Using these sample estimates, we further computed the $L_2$-norm of the bias and the Frobenius norm of the covariance matrices as functions of the number of local steps.

The results are presented on \cref{fig:bias-variance-tradeoffs}.
From \cref{fig:fedavg-bias-variance}, we see that as the amount of local computation increases, the bias in \FedAvg delta estimates grows and the variance reduces.
For \FedPA (\cref{fig:fedpa-bias-variance-steps}), the trends turn out to be the opposite: as the number of local steps increases, the bias consistently reduces; the variance initially goes up, but with enough samples joins the downward trend.
Note that the initial upward trend in the variance is due to the fact that we used the same \emph{fixed} shrinkage $\rho$ regardless of the number of local steps.
To avoid sharp increases in the variance, $\rho$ must be selected for each number of local steps separately; \cref{fig:fedpa-bias-variance-rho} demonstrates how the bias and variance depend on the shrinkage hyperparameter for some fixed number of local steps.\footnote{%
One could also use posterior samples to estimate the best possible $\rho$ that balance the bias-variance tradeoff \citep[\eg,][]{chen2010shrinkage} and avoids sharp increases in the variance.}

\subsection{Analysis of the Quality of IASG-based Sampling and Covariance}
\label{app:iasg-quality-analysis}

The more and better samples we can obtain locally, the lower the bias and variance of the gradients of $\Qc(\thetav)$ will be, resulting in faster convergence to a fixed point closer to the global optimum.
For local sampling, we proposed to use a variant of SG-MCMC called Iterate Averaged Stochastic Gradient (IASG) developed by \citet{mandt2017stochastic}, given in \cref{alg:iasg-sampling}.
The algorithm generates samples by simply averaging every $K$ intermediate iterates produced by a client optimizer (typically, SGD with some a fixed learning rate $\alpha$) after skipping the first $B$ iterates as a burn-in phase.\footnote{%
Note that in our experiments in \cref{sec:experiments}, instead of using $B$ local steps for burn-in at each round, we used several \emph{initial rounds} as burn-in-only rounds, running \FedPA in the \FedAvg regime.}

\vspace{1ex}

\emph{How good are the samples produced by IASG and how do different parameters of the algorithm affect the quality of the samples?}
To answer this question, we run IASG on synthetic least squares problems, for which we can compute the actual posterior distribution and measure the quality of the samples by evaluating the effective sample size~\citep[ESS,][]{liu1996metropolized, owen2013mcbook}.
Given $\ell$ approximate posterior samples $\{\thetav_1, \ldots, \thetav_\ell\}$, the ESS statistic can be computed as follows:
\vspace{-1ex}
\begin{equation*}
    \label{eq:effective-sample-size}
    \mathrm{ESS}\left(\{\thetav_i\}_{j=1}^\ell\right) := \left. \left(\sum_{j=1}^\ell w_j \right)^2 \middle/ \sum_{j=1}^\ell w_j^2 \right.,
\end{equation*}
\vspace{-1.75ex}
where weights $w_j$ must be proportional to the posterior probabilities, or equivalently to the loss.

\paragraph{Effects of the dimensionality, the number of data points, and IASG parameters on ESS.}
The results of our synthetic experiments are presented below in \cref{fig:ess}.
The takeaways are as follows:
\begin{itemize}[topsep=0pt,itemsep=0pt,leftmargin=20pt]
    \item More burn-in steps (or epochs) generally improve the quality of samples.
    \item The larger the number of steps per sample the better (less correlated) the samples are.
    \item The learning rate is the most sensitive and important hyperparameter---if too large, IASG might diverge (happened in the 1000 dimensional case); if too small, the samples become correlated.
    \item Finally, the quality of the samples deteriorates with the increase in the number of dimensions.
\end{itemize}

\begin{figure}[h]
    \centering
    \begin{subfigure}[b]{\linewidth}
        \caption{ESS as a function of the number of burn-in steps. (Steps per sample: 50.)}
        \label{fig:ess-burn-in}
        \includegraphics[width=\linewidth]{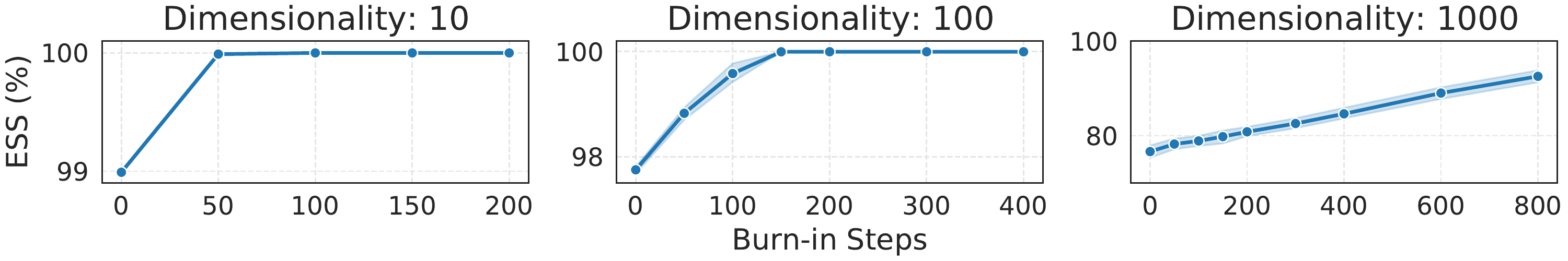}
        \vspace{-1ex}
    \end{subfigure}
    \begin{subfigure}[b]{\linewidth}
        \caption{ESS as a function of the number of steps per sample. (Burn-in steps: 100.)}
        \label{fig:ess-steps-per-sample}
        \includegraphics[width=\linewidth]{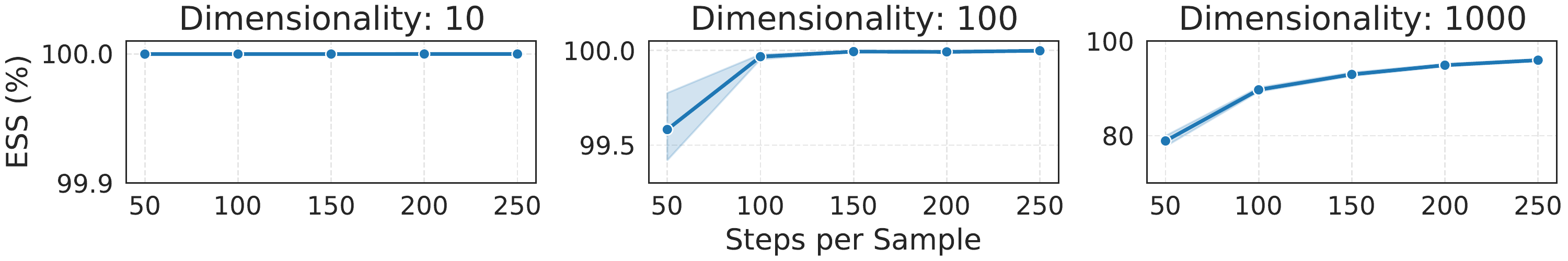}
        \vspace{-1ex}
    \end{subfigure}
    \begin{subfigure}[b]{\linewidth}
        \caption{ESS as a function of the learning rate. (Burn-in steps: 100, steps per sample: 50.)}
        \label{fig:ess-lr}
        \includegraphics[width=\linewidth]{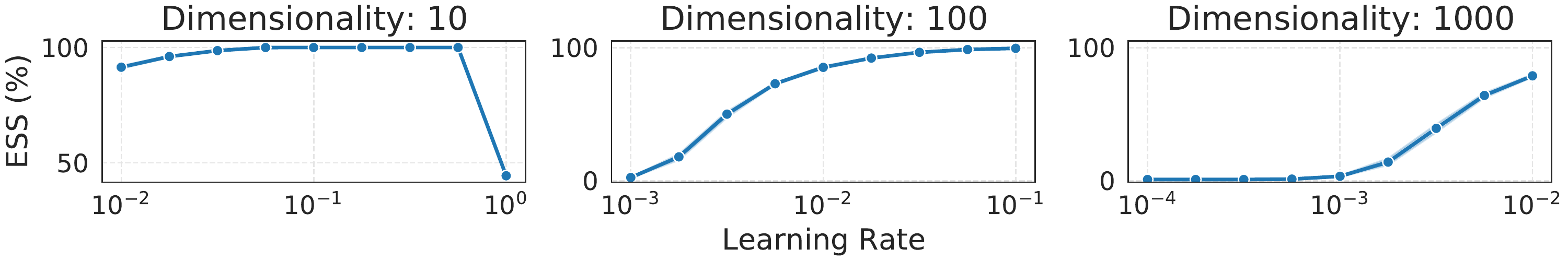}
    \end{subfigure}
    \caption{The ESS statistics for samples produced by IASG on random synthetic least squares linear regression problems of dimensionality 10, 100, 1000.
    Total number of data points per problem: 500, batch size: 10.
    In (a) and (b) the learning rate was set to 0.1 for 10 and 100 dimensions, and 0.01 for 1000 dimensions.}
    \label{fig:ess}
\end{figure}

\clearpage
\section{Proofs}
\label{sec:proofs}

\posteriordecomposition*
\begin{proof}
Under the uniform prior, the following equivalence holds for $\prob{\thetav \mid D}$ as a function of $\thetav$:
\begin{equation}
    \label{eq:posterior-decomposition}
    \prob{\thetav \mid D} \propto \prob{D \mid \thetav} = \prod_{z \in D} \prob{z \mid \thetav} = \prod_{i=1}^N \underbrace{\prod_{z \in D_i} \prob{z \mid \thetav}}_\text{local likelihood} \propto \prod_{i=1}^N \prob{\thetav \mid D_i}
\end{equation}
The proportionality constant between the left and right hand side in \cref{eq:posterior-decomposition} is $\prod_{i=1}^N \prob{D_i} / \prob{D}$.
\end{proof}

\posteriorinference*
\begin{proof}
The statement of the proposition (implicitly) assumes that all matrix inverses exist.
Then, the quadratic $\Qc(\thetav)$ is positive definite (PD) since $\Av$ is PD as a convex combination of PD matrices $\Sigmav_i^{-1}$.
Thus, the quadratic has a unique solution $\thetav^\star$ where the gradient of the objective vanishes:
\begin{equation}
    \Av \thetav^\star - \bv = 0 \quad \Rightarrow \quad \thetav^\star = \Av^{-1} \bv = \left(\sum_{i=1}^N q_i \Sigmav_i^{-1}\right)^{-1} \sum_{i=1}^N q_i \Sigmav_i^{-1} \muv_i \equiv \muv,
\end{equation}
which implies that $\muv$ is the unique minimizer of $\Qc(\thetav)$.
\end{proof}

\section{Computation of Client Deltas via Dynamic Programming}
\label{sec:client-delta-computation-dp}

In this section, we provide a constructive proof for the following theorem by designing an efficient algorithm for computing $\hat \Deltav_\ell := \hat \Sigmav_\ell^{-1} (\thetav - \hat \muv_\ell)$ on the clients in time and memory linear in the number of dimensions $d$ of the parameter vector $\thetav \in \Rb^d$.

\clientdeltadp*

The na\"{i}ve computation of update vectors (\ie, where we first estimate $\hat \muv_\ell$ and $\hat \Sigmav_\ell$ from posterior samples and use them to compute deltas) requires $\Oc(d^2)$ storage and $\Oc(d^3)$ compute on the clients and is both computationally and memory intractable.
We derive an algorithm that, given $\ell$ posterior samples, allows us to compute $\hat \Deltav_\ell$ using only $\Oc(\ell d)$ memory and $\Oc(\ell^2 d)$ compute.

The algorithm makes use of the following two components:
\begin{enumerate}
    \item The shrinkage estimator of the covariance~\citep{ledoit2004well}, which is known to be well-conditioned even in high-dimensional settings (\ie, when the number of samples is smaller than the number of dimensions) and is widely used in econometrics~\citep{ledoit2004honey} and computational biology~\citep{schafer2005shrinkage}.
    \item Incremental computation of $\hat \Sigmav_\ell^{-1} (\thetav_\ell - \hat \muv_\ell)$ that exploits the fact that each new posterior sample only adds a rank-1 component to $\hat \Sigmav_\ell$ and applies the Sherman-Morrison formula to derive a dynamic program for updating $\hat \Deltav_\ell$.
\end{enumerate}

\paragraph{Notation.}
For the sake of this discussion, we denote $\thetav$ (\ie, the server state broadcasted to the clients at round $t$) as $\xv_0$, drop the client index $i$, denote posterior samples as $\xv_j$, sample mean as $\bar \xv_\ell := \frac{1}{\ell} \sum_{j=1}^\ell \xv_j$, and sample covariance as $\hat \Sv_\ell := \frac{1}{\ell - 1} \sum_{j=1}^\ell (\xv_j - \bar \xv_\ell) (\xv_j - \bar \xv_\ell)^\top$.

\subsection{The Shrinkage Estimator of the Covariance}
\label{app:covariance-shrinkage-estimator}

\citet{ledoit2004well} proposed to estimate a high-dimensional covariance matrix using a convex combination of identity matrix and sample covariance (known as the LW or shrinkage estimator):
\begin{equation}
    \label{eq:ledoit-wolf-cov-estimator}
    \hat \Sigmav_\ell(\rho_\ell) := \rho_\ell \Iv + (1 - \rho_\ell) \Sv_\ell,
\end{equation}
where $\rho_\ell$ is a scalar parameter that controls the bias-variance tradeoff of the estimator.
As an aside, while $\rho_\ell$ can be arbitrary and the optimal $\rho_\ell$ requires knowing the true covariance $\Sigmav$, there are near-optimal ways to estimate $\hat \rho_\ell$ from the samples~\citep{chen2010shrinkage}, which we discuss at the end of this section.

In this section, we focus on deriving an expression for $\rho_t$ as a function of $t = 1, \ldots, \ell$ that ensures that the difference between $\hat \Sigmav_t$ and $\hat \Sigmav_{t-1}$ is a rank-1 matrix (this is not the case for arbitrary $\rho$'s).

\paragraph{Derivation of a shrinkage estimator that admits rank-1 updates.}
Consider the following matrix:
\begin{equation}
    \label{eq:tilde-sigma}
    \tilde \Sigmav_t := \Iv + \beta_t \hat \Sv_t,
\end{equation}
where $\beta_t$ is a scalar function of $t = 1, 2, \ldots, \ell$.
We would like to find $\beta_t$ such that $\tilde \Sigmav_t = \tilde \Sigmav_{t-1} + \gamma_t \Uv_t$, where $\Uv_t$ is a rank-1 matrix, \ie, the following equality should hold:
\begin{equation}
    \label{eq:tilde-sigma-equality}
    \beta_t \hat \Sv_t = \beta_{t-1} \hat \Sv_{t-1} + \gamma_t \Uv_t
\end{equation}
To determine the functional form of $\beta_t$, we need recurrent relationships for $\bar \xv_{t}$ and $\hat \Sv_t$.
For the former, note that the following relationship holds for two consecutive estimates of the sample mean, $\bar \xv_{t-1}$ and $\bar \xv_{t}$:
\begin{equation}
    \label{eq:consecutive-mean-estimates}
    \bar \xv_t = \frac{(t-1) \bar \xv_{t-1} + \xv_t}{t} = \bar \xv_{t-1} + \frac{1}{t} (\xv_t - \bar \xv_{t-1})
\end{equation}
This allows us to expand $\hat \Sv_t$ as follows:
\begin{equation}
    \label{eq:sample-covariance-sum-recurrence}
    \begin{aligned}
    (t - 1) \hat \Sv_t =\ & \sum_{j=1}^t (\xv_j - \bar \xv_t) (\xv_j - \bar \xv_t)^\top \\
    =\ &\sum_{j=1}^t \left( \xv_j - \bar \xv_{t-1} - \frac{\xv_t - \bar \xv_{t-1}}{t} \right) \left( \xv_j - \bar \xv_{t-1} - \frac{\xv_t - \bar \xv_{t-1}}{t} \right)^\top \\
    =\ &\underbrace{\sum_{j=1}^{t-1} \left( \xv_j - \bar \xv_{t-1} \right) \left( \xv_j - \bar \xv_{t-1} \right)^\top}_{= (t - 2) \hat \Sv_{t-1}} - 2\frac{\xv_t - \bar \xv_{t-1}}{t} \underbrace{\sum_{j=1}^{t-1} \left( \xv_j - \bar \xv_{t-1} \right)^\top}_{= 0} + \\
    &\frac{t - 1}{t^2} (\xv_t - \bar \xv_{t-1}) (\xv_t - \bar \xv_{t-1})^\top + \left(\frac{t - 1}{t}\right)^2 (\xv_t - \bar \xv_{t-1}) (\xv_t - \bar \xv_{t-1})^\top \\
    =\ & (t - 2) \hat \Sv_{t - 1} + \frac{t - 1}{t} (\xv_t - \bar \xv_{t-1}) (\xv_t - \bar \xv_{t-1})^\top
    \end{aligned}
\end{equation}
Thus, we have the following recurrent relationship between $\hat \Sv_t$ and $\hat \Sv_{t-1}$:
\begin{equation}
    \label{eq:sample-covariance-recurrence}
    \hat \Sv_t = \left( \frac{t - 2}{t - 1} \right) \hat \Sv_{t - 1} + \frac{1}{t} (\xv_t - \bar \xv_{t-1}) (\xv_t - \bar \xv_{t-1})^\top
\end{equation}
Now, we can plug \eqref{eq:sample-covariance-recurrence} into \eqref{eq:tilde-sigma-equality} and obtain the following equation:
\begin{equation}
    \label{eq:tilde-sigma-equation}
    \beta_t \left( \frac{t - 2}{t - 1} \right) \hat \Sv_{t - 1} + \frac{\beta_t}{t} (\xv_t - \bar \xv_{t-1}) (\xv_t - \bar \xv_{t-1})^\top = \beta_{t-1} \Sv_{t-1} + \gamma_t \Uv_t,
\end{equation}
which implies that $\Uv_t := (\xv_t - \bar \xv_{t-1}) (\xv_t - \bar \xv_{t-1})^\top$, $\gamma_t := \beta_t / t$, and the following telescoping expressions for $\beta_t$:
\begin{equation}
    \label{eq:beta-n-relationship}
    \beta_t = \left( \frac{t - 1}{t - 2} \right) \beta_{t-1} = \left( \frac{t - 1}{\cancel{t - 2}} \cdot \frac{\cancel{t - 2}}{t - 3} \right) \beta_{t-2} = \dots = (t - 1) \beta_{2},
\end{equation}
where we set $\beta_2 \equiv \rho \in [0, +\infty)$ to be a constant.
Thus, if we define $\tilde \Sigmav_t := \Iv + \rho (t - 1) \hat \Sv_t$, then the following recurrent relationships will hold:
\begin{align}
    \label{eq:tilde-sigma-recurrence}
    \tilde \Sigmav_1 &= \Iv, \nonumber \\
    \tilde \Sigmav_2 &= \Iv + \rho \hat \Sv_2 = \tilde \Sigmav_1 + \frac{\rho}{2} (\xv_2 - \bar \xv_1) (\xv_2 - \bar \xv_1)^\top, \nonumber \\
    \tilde \Sigmav_3 &= \Iv + 2 \rho \hat \Sv_3 = \tilde \Sigmav_2 + \frac{2 \rho}{3} (\xv_3 - \bar \xv_2) (\xv_3 - \bar \xv_2)^\top, \nonumber \\
    &\dots \nonumber \\
    \tilde \Sigmav_t &= \Iv + (t - 1) \rho \hat \Sv_{t-1} = \tilde \Sigmav_{t-1} + \frac{(t - 1) \rho}{t} (\xv_t - \bar \xv_{t-1}) (\xv_t - \bar \xv_{t-1})^\top
\end{align}
Finally, we can obtain a shrinkage estimator of the covariance from $\tilde \Sigmav_n$ by normalizing coefficients:
\begin{equation}
    \label{eq:our-shrinkage-estimator}
    \hat \Sigmav_t := \underbrace{\frac{1}{1 + (t - 1) \rho}}_{\rho_t} \Iv + \underbrace{\frac{(t - 1) \rho}{1 + (t - 1) \rho}}_{1 - \rho_t} \hat \Sv_t = \rho_t \tilde \Sigmav_t
\end{equation}
Note that $\hat \Sigmav_1 \equiv \Iv$ and $\hat \Sigmav_t \rightarrow \Sv_t$ as $t \rightarrow \infty$.

\subsection{Computing Deltas using Sherman-Morrison and Dynamic Programming}
\label{app:deltas-sherman-morrison-dp}

Since $\hat \Sigmav_\ell$ is proportional to $\tilde \Sigmav_\ell$ and the latter satisfies recurrent rank-1 updates given in \cref{eq:tilde-sigma-recurrence}, denoting $\uv_\ell := \xv_\ell - \bar \xv_{\ell-1}$, we can express $\hat \Sigmav_\ell^{-1} = \tilde \Sigmav_\ell^{-1} / \rho_\ell$ using the Sherman-Morrison formula:
\begin{equation}
    \label{eq:hat-sigma-inverse-sherman-morrison}
    \tilde \Sigmav_\ell^{-1} = \tilde \Sigmav_{\ell-1}^{-1} - \frac{\gamma_\ell \left( \tilde \Sigmav_{\ell-1}^{-1} \uv_\ell \uv_\ell^\top \tilde \Sigmav_{\ell-1}^{-1} \right)}{1 + \gamma_\ell \left( \uv_\ell^\top \tilde \Sigmav_{\ell-1}^{-1} \uv_\ell \right)}
\end{equation}
Note that we would like to estimate $\hat \Deltav_\ell := \hat \Sigmav_\ell^{-1} (\xv_0 - \bar \xv_\ell)$, which can be done without computing or storing any matrices if we know $\tilde \Sigmav_{\ell-1}^{-1} \uv_\ell$ and $\tilde \Sigmav_{\ell-1}^{-1} (\xv_0 - \bar \xv_\ell)$.

Denoting $\tilde \Deltav_t := \tilde \Sigmav_t^{-1} (\xv_0 - \bar \xv_t)$, and knowing that $\xv_0 - \bar \xv_\ell = (\xv_0 - \bar \xv_{\ell-1}) - \uv_\ell / \ell$ (which follows from \cref{eq:consecutive-mean-estimates}), we can compute $\hat \Deltav_\ell$ using the following recurrence:
\begin{align}
    \tilde \Deltav_1 &:= \xv_0 - \bar \xv_1, \quad \vv_{1,2} := \xv_2 - \bar \xv_1, && \text{\color{gray}// initial conditions} \\
    \uv_t &:= \xv_t - \bar \xv_{t-1}, \quad \vv_{t-1,t} := \tilde \Sigmav_{t-1}^{-1} \uv_t  && \text{\color{gray}// recurrence for $\uv_t$ and $\vv_{t-1,t}$} \\
    \tilde \Deltav_t &= \tilde \Deltav_{t-1} - \left[ 1 + \frac{\gamma_t \left(t \uv_t^\top \tilde \Deltav_{t-1} - \uv_t^\top \vv_{t-1,t} \right)}{1 + \gamma_t \left(\uv_t^\top \vv_{t-1,t}\right)} \right] \frac{\vv_{t-1,t}}{t} && \text{\color{gray}// recurrence for $\tilde \Deltav_t$} \\
    \hat \Deltav_t &= \tilde \Deltav_t / \rho_t && \text{\color{gray}// final step for $\hat \Deltav_t$}
\end{align}
Remember that our goal is to avoid storing $d \times d$ matrices throughout the computation.
In the above recursive equations, all expressions depend only on vector-vector products except the one for $\vv_{t-1,t}$ which needs a matrix-vector product.
To express the latter one in the form of vector-vector products, we need another 2-index recurrence on $\vv_{i, j} := \tilde \Sigmav_{i}^{-1} \uv_j$:
\begin{align}
    \vv_{1, 2} &= \uv_2, \quad \vv_{1, 3} = \uv_3, \quad \ldots \quad \vv_{1, t} = \uv_t && \text{\color{gray}// initial conditions} \\
    \vv_{t-1, t}
        &= \left[ \tilde \Sigmav_{t-2}^{-1} - \frac{\gamma_{t-1} \left( \tilde \Sigmav_{t-2}^{-1} \uv_{t-1} \uv_{t-1}^\top \tilde \Sigmav_{t-2}^{-1} \right)}{1 + \gamma_{t-1} \left( \uv_{t-1}^\top \tilde \Sigmav_{t-2}^{-1} \uv_{t-1} \right)} \right] \uv_t && \text{\color{gray}// Sherman-Morrison} \\
        &= \vv_{t-2, t} - \frac{\gamma_{t-1} \left( \vv_{t-2, t-1}^\top \uv_t \right)}{1 + \gamma_{t-1} \left( \vv_{t-2, t-1}^\top \uv_{t-1} \right)} \vv_{t-2, t-1} \\
        &= \vv_{1, t} - \sum_{k=2}^{t-1} \frac{\gamma_{k} \left( \vv_{k-1, k}^\top \uv_t \right)}{1 + \gamma_{k} \left( \vv_{k-1, k}^\top \uv_{k} \right)} \vv_{k-1, k} && \text{\color{gray}// final expression for $\vv_{t-1, t}$}
\end{align}
Now, equipped with these two recurrences, given a stream of samples $\xv_1, \xv_2, \dots, \xv_t, \dots$, we compute $\hat \Deltav_t$ for $t \geq 2$ based on $\xv_t$, $\{\uv_k\}_{k=1}^{t-1}$, $\{\vv_{k-2, k-1}\}_{k=1}^{t-1}$ and $\hat \Deltav_{t-1}$ using the following two steps:
\begin{enumerate}
    \item Compute $\uv_t$ and $\vv_{t-1,t}$ using the second recurrence.
    \item Compute $\hat \Deltav_t$ from $\uv_t$, $\vv_{t-1,t}$, and $\hat \Deltav_{t-1}$ using the first recurrence.
\end{enumerate}
For each new sample in the sequence, we repeat the two steps to obtain the updated $\hat \Deltav_t$ estimate, until we have processed all $\ell$ samples.
Note that the first step requires $\Oc(t)$ vector-vector multiplies, \ie, $\Oc(td)$ compute, and $\Oc(d)$ memory, and the second step a $\Oc(1)$ number of vector-vector multiplies.
As a result, the computational complexity of estimating $\hat \Deltav_\ell$ is $\Oc(\ell^2 d)$ and the storage needed for the dynamic programming state represented by a tuple $\left(\{\uv_k\}_{k=1}^{t-1}, \{\vv_{k-2, k-1}\}_{k=1}^{t-1}, \hat \Deltav_{t-1}\right)$ is $\Oc(\ell d)$.

\paragraph{The any-time property of the resulting algorithm.}
Interestingly, the above algorithm is online as well as \emph{any-time} in the following sense: as we keep sampling more from the posterior, the estimate of $\hat \Deltav$ keeps improving, but if stopped at any time, the algorithm still produces the best possible estimate under the given time constraint.
If the posterior sampler is stopped during the burn-in phase or after having produced only 1 posterior sample, the returned delta will be identical to \FedAvg.
By spending more compute on the clients (and a bit of extra memory), with each additional posterior sample $\xv_t$, we have $\hat \Deltav_t \underset{t \rightarrow \infty}{\longrightarrow} \Sigmav^{-1} (\xv_0 - \muv)$.

\paragraph{Optimal selection of $\rho$.}
Note that to be able to run the above described algorithm in an online fashion, we have to select and commit to a $\rho$ before seeing any samples.
Alternatively, if the online and any-time properties of the algorithm are unnecessary, we can first obtain $\ell$ posterior samples $\{\xv_k\}_{k=1}^\ell$, then infer a near-optimal $\hat \rho_\star$ from these samples---\eg, using the Rao-Blackwellized version of the LW estimator (RBLW) or the oracle approximating shrinkage (OAS), both proposed and analyzed by~\citet{chen2010shrinkage}---and then use the inferred $\hat \rho_\star$ to compute the corresponding delta using our dynamic programming algorithm.

\section{Details on the Experimental Setup}
\label{app:exp-details}

In this part, we provide additional details on our experimental setup, including a more detailed description of the datasets and tasks, models, methods, and hyperparameters.

\subsection{Datasets, Tasks, and Models}
\label{app:exp-details-datasets-models}

Statistics of the datasets used in our empirical study can be found in \cref{tab:datasets}.
All the datasets and tasks considered in our study are a subset of the tasks introduced by \citet{reddi2020adaptive}.

\paragraph{\emnist.}
The dataset is comprised of $28 \times 28$ images of handwritten digits and lower and upper case English characters (62 different classes total).
The federated version of the dataset was introduced by~\citet{caldas2018leaf}, and is partitioned by the author of each character.
The heterogeneity of the dataset is coming from the different writing style of each author.
We use this dataset for the character recognition task, termed EMNIST CR in \citet{reddi2020adaptive} and the same model architecture, which is a 2-layer convolutional network with $3 \times 3$ kernel, max pooling, and dropout, followed by a 128-unit fully connected layer.
The model was adopted from the TensorFlow Federated library: \url{https://bit.ly/3l41LKv}.

\paragraph{\cifar.}
The federated version of \cifar was introduced by \citet{reddi2020adaptive}.
The training set of the dataset is partitioned among 500 clients, 100 data points per client.
The partitioning was created using a two-step latent Dirichlet allocation (LDA) over to ``coarse'' to ``fine'' labels which created a label distribution resembling a more realistic federated setting.
For the model, also following \citet{reddi2020adaptive}, we used a modified ResNet-18 with group normalization layer instead of batch normalization, as suggested by \citet{hsieh2019non}.
The model was adopted from the TensorFlow Federated library: \url{https://bit.ly/33jMv6g}.

\paragraph{\stackoverflow.}
The dataset consists of text (questions and answers) asked and answered by the total of 342,477
unique users, collected from \url{https://stackoverflow.com}.
The federated version of the dataset partitions it into clients by the user.
In addition, questions and answers in the dataset have associated metadata, which includes tags.
We consider two tasks introduced by \citet{reddi2020adaptive}: the next word prediction task (NWP) and the tag prediction task via multi-label logistic regression.
The vocabulary of the dataset is restricted to 10,000 most frequently used words for each task (\ie, the NWP task becomes a multi-class classification problem with 10,000 classes).
The tags are similarly restricted to 500 most frequent ones (\ie, the LR task becomes a multi-label classification proble with 500 labels).

For tag prediction, we use a simple linear regression model where each question or answer are represented by a normalized bag-of-words vector.
The model was adopted from the TensorFlow Federated library: \url{https://bit.ly/2EXjAeY}.

For the NWP task, we restrict each client to the first 128 sentences in their dataset, perform padding and truncation to ensure that sentences have 20 words, and then represent each sentence as a sequence of indices
corresponding to the 10,000 frequently used words, as well as indices representing padding, out-of-vocabulary (OOV) words, beginning of sentence (BOS), and end of sentence (EOS).
We note that accuracy of next word prediction is measured only on the content words and \emph{not} on the OOV, BOS, and EOS symbols.
We use an RNN model with 96-dimensional word embeddings (trained from scratch), 670-dimensional LSTM layer, followed by a fully connected output softmax layer.
The model was adopted from the TensorFlow Federated library: \url{https://bit.ly/2SoSi3X}.

\subsection{Methods}
\label{app:exp-details-models-methods}

\begin{table}[t]
    \centering
    \caption{Selected optimizers for each task.
    For SGD, $m$ denotes momentum.
    For Adam, $\beta_1=0.9, \beta_2=0.99$.}
    \small
    \begin{tabu} to \linewidth {X[1.2,l]|X[1,l]|X[1,l]|X[1.3,l]|X[1.3,l]}
        \toprule
        \textbf{Hyperparameter}  & \textbf{\emnist}  & \textbf{\cifar}   & \textbf{\stackoverflow NWP}   & \textbf{\stackoverflow LR}        \\
        \midrule
        \textsc{ServerOpt}  & SGD ($m=0.9$)      & SGD ($m=0.9$)      & Adam ($\tau=10^{-3}$)          & Adagrad ($\tau=10^{-5}$) \\
        \textsc{ClientOpt}  & SGD ($m=0.9$)      & SGD ($m=0.9$)      & SGD ($m=0.0$)                  & SGD ($m=0.9$)            \\
        \# clients p/round  & 100                & 20                 & 10                             & 10            \\
        \bottomrule
    \end{tabu}
    \label{tab:optimizers}
    \vspace{-1ex}
\end{table}

As mentioned in the main text, we used \FedAvg with adaptive server optimizers with 1 or multiple local epochs per client as our baselines.
For each task, we selected the best server optimizer based on the results reported by \citet{reddi2020adaptive}, given in \cref{tab:optimizers}.
We emphasize, even though we refer to all our baseline methods as \FedAvg, the names of the methods as given by \citet{reddi2020adaptive} should be \textsc{FedAvgM} for \emnist and \cifar, \textsc{FedAdam} for \stackoverflow NWP and \textsc{FedAdagrad} for \stackoverflow LR.
Another difference between our baselines and \citet{reddi2020adaptive} is that we ran SGD \emph{with momentum} on the clients for \emnist, \cifar, and \stackoverflow LR, as that improved performance of the methods with multiple epochs per client.

Our \FedPA methods used the same configurations as \FedAvg baselines; moreover, \FedPA and \FedAvg were identical (algorithmically) during the burn-in phase and only different in the client-side computation during the sampling phase of \FedPA.

\subsection{Hyperparameters and Grids}
\label{app:exp-details-hypers}

\begin{table}[t]
    \centering
    \caption{Hyperparameter grids for each task.}
    \small
    \begin{tabu} to \linewidth {X[1.1,l]|X[0.9,l]|X[0.8,l]|X[1.2,l]|X[1.1,l]}
        \toprule
        \textbf{Hyperparameter} & \textbf{\emnist}  & \textbf{\cifar}   & \textbf{\stackoverflow NWP}   & \textbf{\stackoverflow LR}        \\
        \midrule
        Server learning rate    & \multicolumn{2}{c|}{$\{0.01, 0.05, 0.1, 0.5, 1, 5\}$} &  $\{0.01, 0.05, 0.1, 0.5, 1\}$      & $\{0.1, 0.5, 1, 5, 10\}$   \\
        Client learning rate    & \multicolumn{2}{c|}{$\{0.001, 0.005, 0.01, 0.05, 0.1\}$}  &  $\{0.01, 0.05, 0.1\}$        & $\{1, 5, 10, 50, 100\}$  \\
        Client epochs           & \multicolumn{4}{c}{$\{2, 5, 10, 20\}$}     \\
        \midrule
        \FedPA burn-in          & \multicolumn{4}{c}{$\{100, 200, 400, 600, 800\}$}   \\
        \FedPA shrinkage        & \multicolumn{4}{c}{$\{0.0001, 0.001, 0.01, 0.1, 1\}$}  \\
        \bottomrule
    \end{tabu}
    \label{tab:hyperparams-grids}
\end{table}
\begin{table}[t]
    \centering
    \caption{The best selected hyperparameters for each task.}
    \small
    \begin{tabu} to \linewidth {X[1.2,l]|X[0.9,l]|X[0.9,l]|X[1.4,l]|X[1.2,l]}
        \toprule
        \textbf{Hyperparameter} & \textbf{\emnist}  & \textbf{\cifar}   & \textbf{\stackoverflow NWP}   & \textbf{\stackoverflow LR}        \\
        \midrule
        Server learning rate    & 0.5   & 0.5   &  1.0      & 5.0   \\
        Client learning rate    & 0.01  & 0.01  &  0.1      & 50.0  \\
        Client epochs           & 5     & 10    &  5        & 5     \\
        \midrule
        \FedPA burn-in          & 100   & 400   &  800      & 800   \\
        \FedPA shrinkage        & 0.1   & 0.01  &  0.01     & 0.01  \\
        \bottomrule
    \end{tabu}
    \label{tab:hyperparams-best}
\end{table}

All hyperparameter grids are given in \cref{tab:hyperparams-grids}.
The best server and client learning rates were selected based on the \FedAvg performance and used for \FedPA.
The best selected hyperparameters are given in \cref{tab:hyperparams-best}.

\section{Additional Experimental Results}
\label{app:exp-additional-results}

We provide additional experimental results.
As mentioned in the main text, the results presented in \cref{tab:results} were selected to highlight the differences between the methods with respect to two metrics of interest: (i) the number of rounds until the desired performance, and (ii) the performance achievable within a fixed number of rounds.
A much fuller picture is given by the learning curves of each method.
Therefore, we plot evaluation losses, accuracies, and metrics of interest over the course of training.
On the plots, individual values at each round are indicated with $\times$-markers and the 10-round running average with a line of the corresponding color.

\paragraph{\emnist.}
Learning curves for \FedAvg and \FedPA on \emnist are given in \cref{fig:learning-curves-appendix-emnist}.
\cref{fig:learning-curves-emnist-appendix-5} shows the best \FedAvg[1], \FedAvg[5], and \FedPA[5] models and \cref{fig:learning-curves-emnist-appendix-20} shows the best \FedAvg[20], and \FedPA[20].
Apart from the fact that multi-epoch versions converge significantly faster than the 1-epoch \FedAvg[1], note that the effect of bias reduction when switching from the burn-in to sampling in \FedPA becomes much more pronounced in the 20-epoch version.

\paragraph{\cifar and \stackoverflow.}
Learning curves for various models on \cifar and \stackoverflow tasks are presented in \cref{fig:learning-curves-appendix-cifar-so-nwp,fig:learning-curves-appendix-so-lr}.
The takeaways for \cifar and \stackoverflow~NWP are essentially the same as for \emnist---much faster convergence with the increased number of local epochs and visually noticeable improvement in losses and accuracies due to sampling-based bias correction in client deltas after the burn-in phase is over.
Interestingly, we see that on \stackoverflow~LR task \FedAvg[1] clearly dominates multi-epoch methods in terms of the loss and recall at 5, losing in precision and macro-F1.
Even more puzzling is the significant drop in the average precision of \FedPA[M] after the switching to sampling, while at the same time a jump in recall and F1 metrics.
This indicates that the global model moves to a different fixed point where it over-predicts positive labels (\ie, less precise) but also less likely to miss rare labels (\ie, higher recall on rare labels, and as a result a jump in macro-F1).
The reason why this happens, however, is unclear.

\begin{figure}[t]
    \centering
    \begin{subfigure}[b]{\linewidth}
        \caption{\emnist: Evaluation loss and accuracy for \textcolor{google-green}{\FedAvg[1]}, \textcolor{google-red}{\FedAvg[5]}, and \textcolor{google-blue}{\FedPA[5]}.}
        \label{fig:learning-curves-emnist-appendix-5}
        \includegraphics[width=0.495\linewidth]{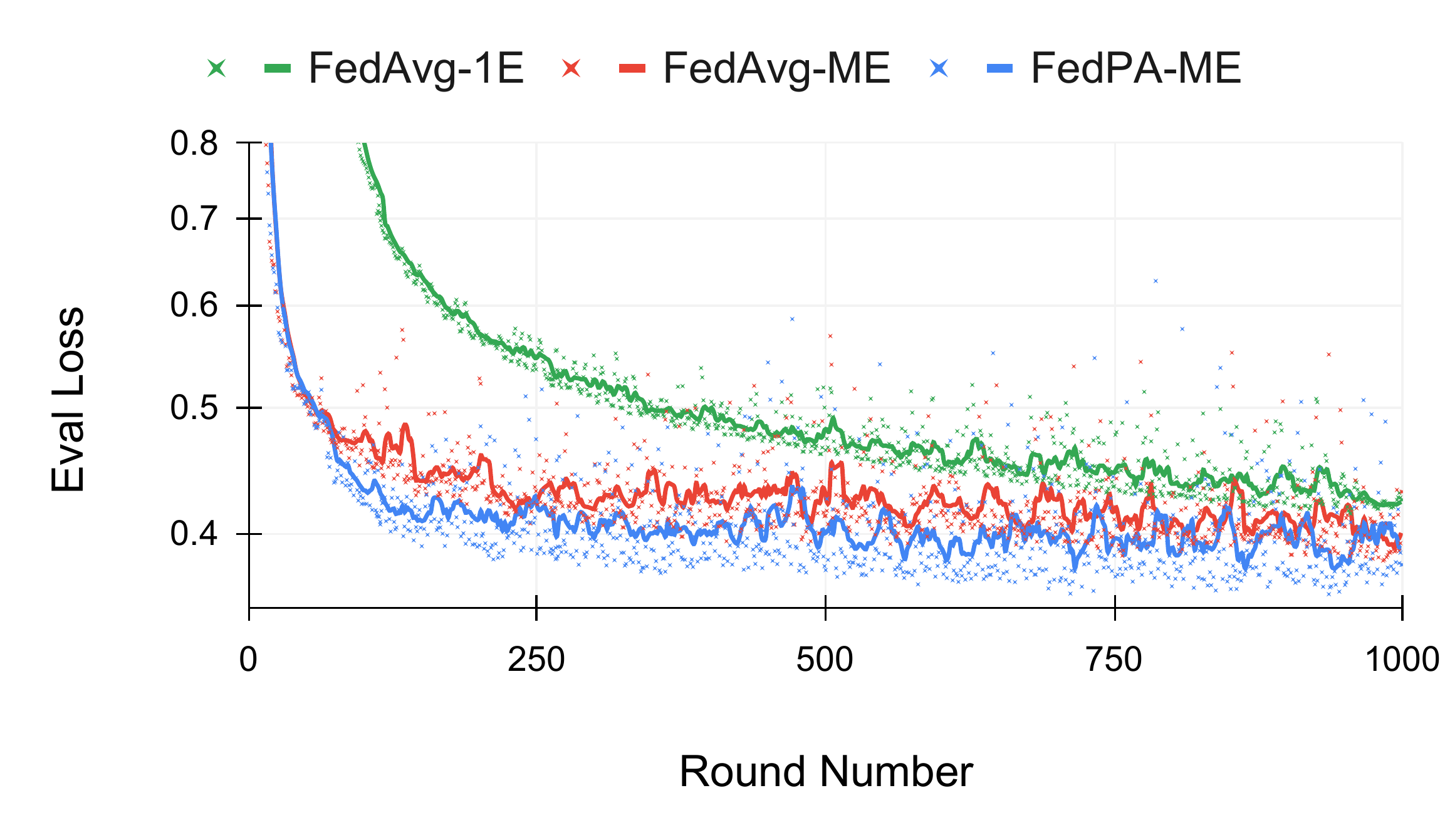}%
        \includegraphics[width=0.495\linewidth]{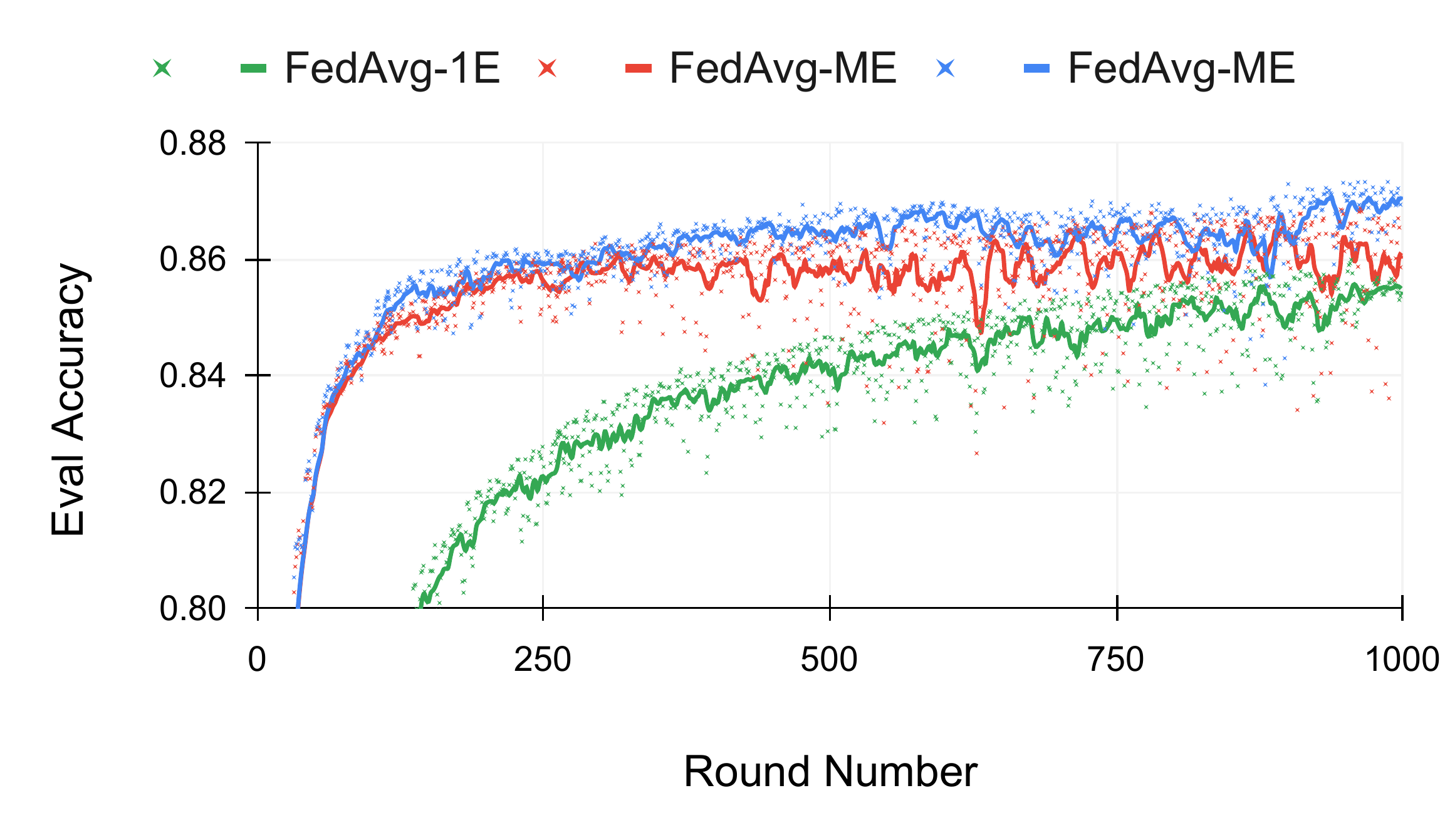}
    \end{subfigure}
    \begin{subfigure}[b]{\linewidth}
        \caption{\emnist: Evaluation loss and accuracy for \textcolor{google-red}{\FedAvg[20]} and \textcolor{google-blue}{\FedPA[20]}.}
        \label{fig:learning-curves-emnist-appendix-20}
        \includegraphics[width=0.495\linewidth]{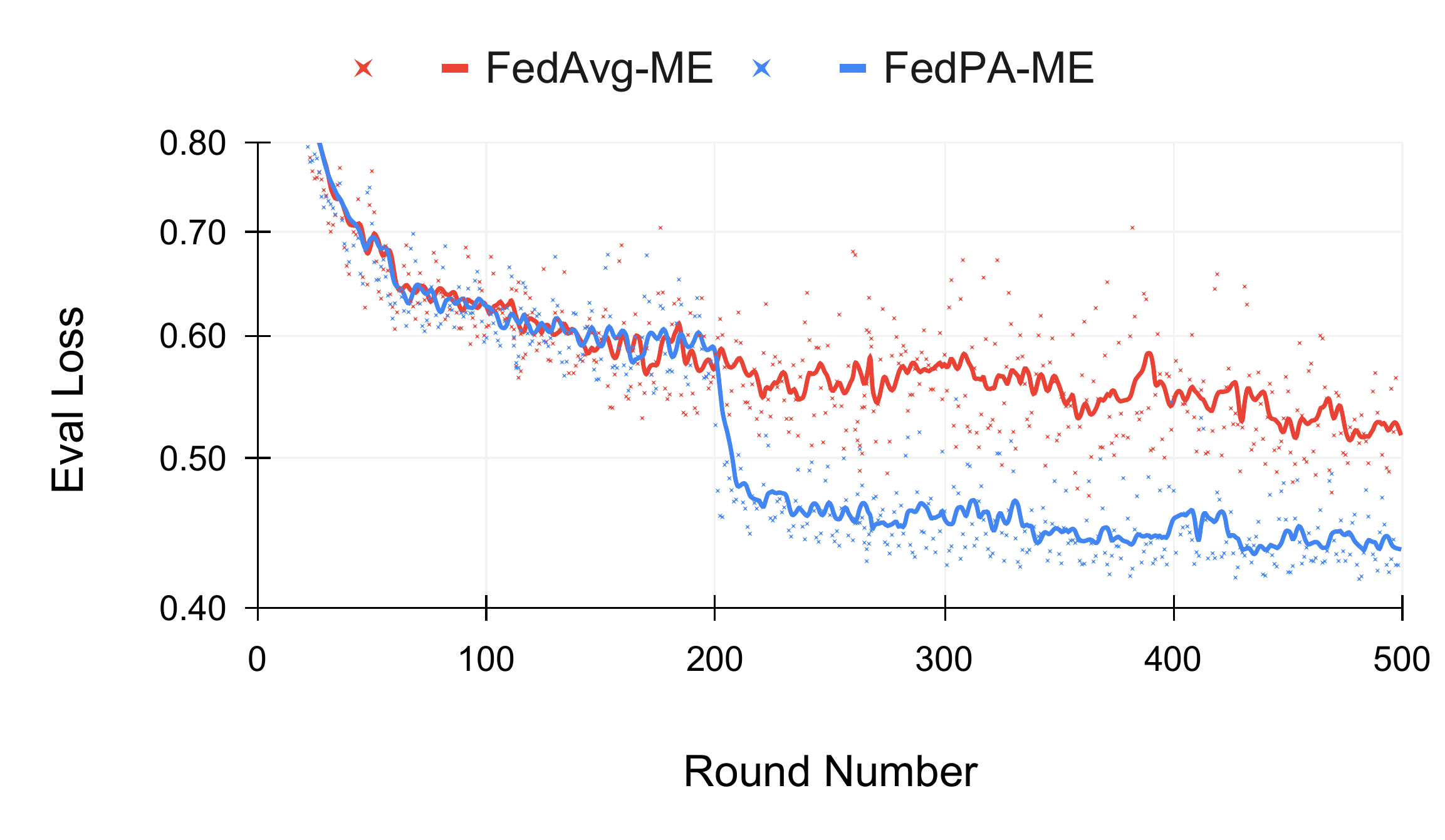}%
        \includegraphics[width=0.495\linewidth]{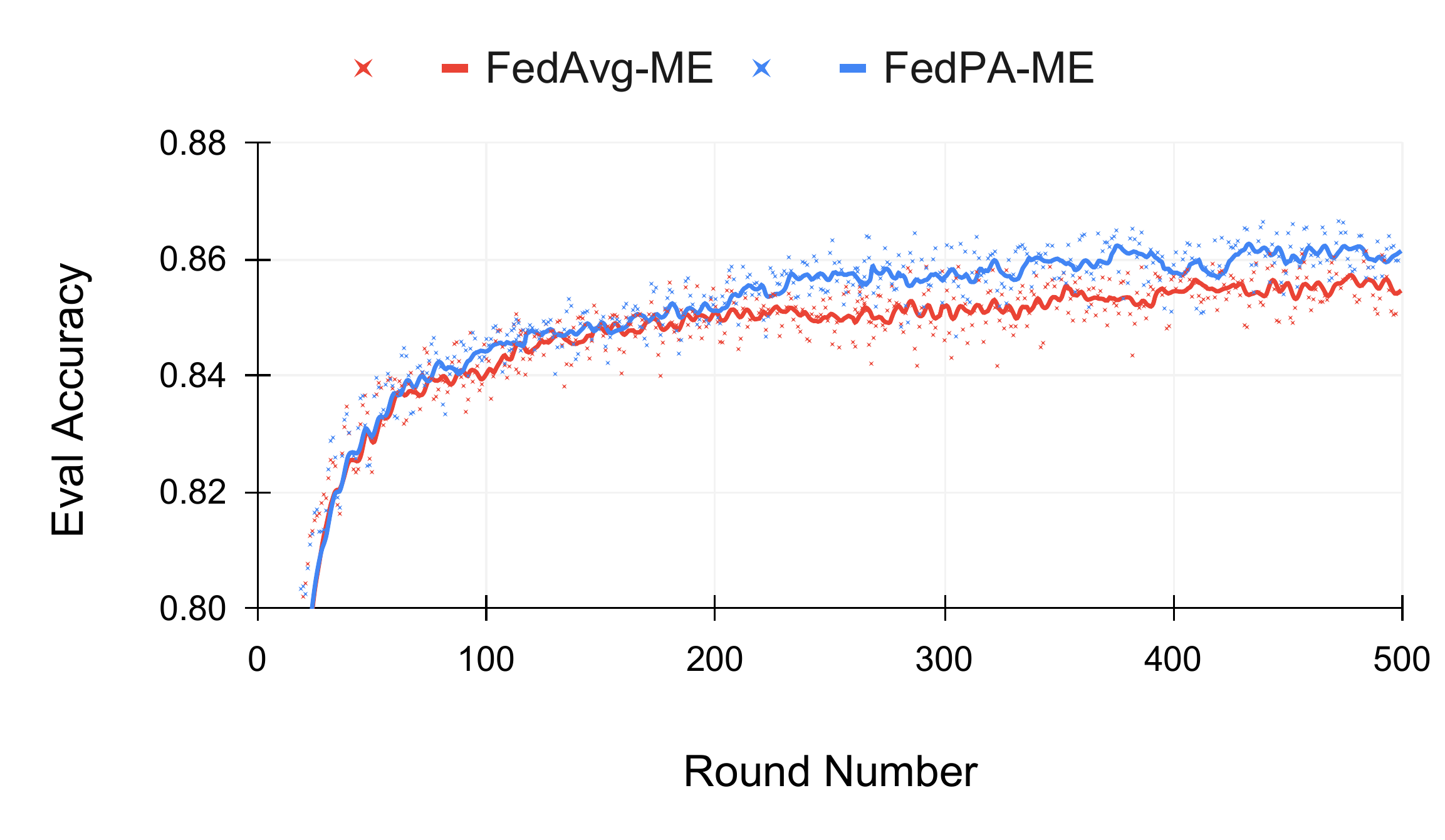}
    \end{subfigure}
    \caption{Evaluation metrics for \FedAvg and \FedPA computed at each training round on \emnist.}
    \label{fig:learning-curves-appendix-emnist}
\end{figure}

\begin{figure}[t]
    \centering
    \begin{subfigure}[b]{\linewidth}
        \caption{\cifar: Evaluation loss and accuracy for \textcolor{google-green}{\FedAvg[1]}, \textcolor{google-red}{\FedAvg[10]}, and \textcolor{google-blue}{\FedPA[10]}.}
        \label{fig:learning-curves-cifar100-appendix}
        \vspace{-1ex}
        \includegraphics[width=0.495\linewidth]{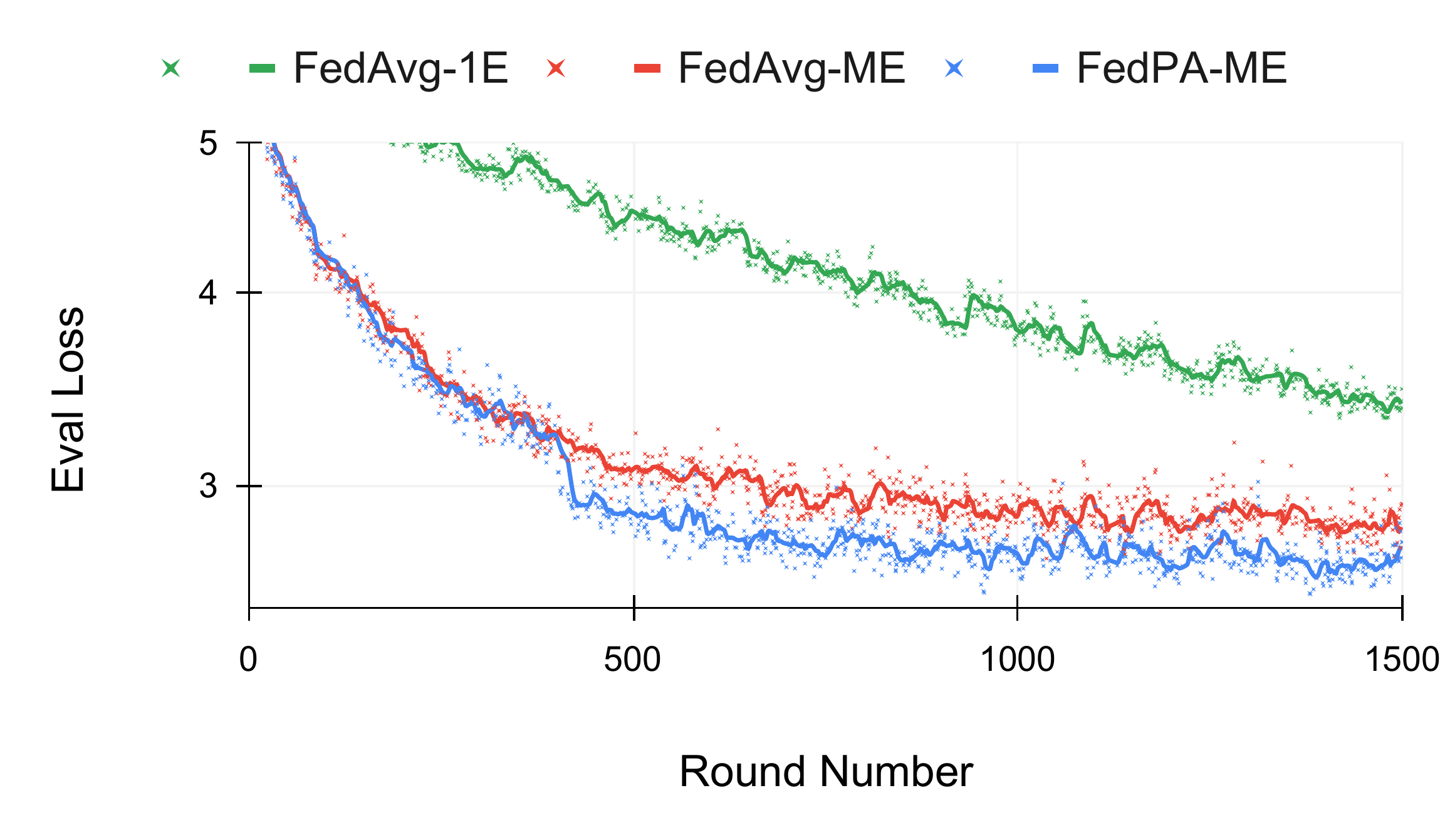}%
        \includegraphics[width=0.495\linewidth]{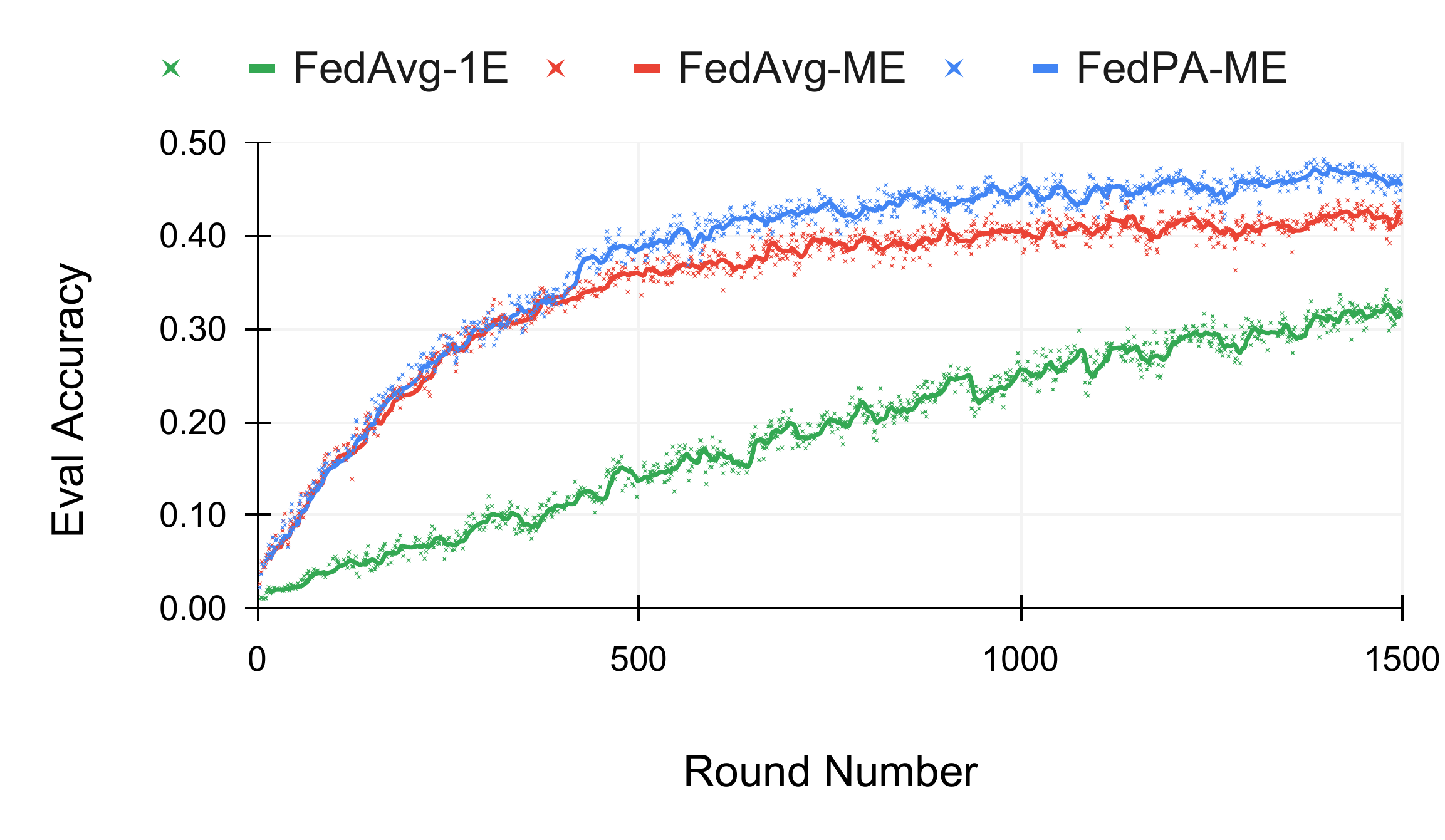}
    \end{subfigure}
    \begin{subfigure}[b]{\linewidth}
        \caption{\stackoverflow-NWP: Evaluation perplexity and accuracy for \textcolor{google-green}{\FedAvg[1]}, \textcolor{google-red}{\FedAvg[5]}, and \textcolor{google-blue}{\FedPA[5]}.}
        \label{fig:learning-curves-so-nwp-appendix}
        \vspace{-1ex}
        \includegraphics[width=0.495\linewidth]{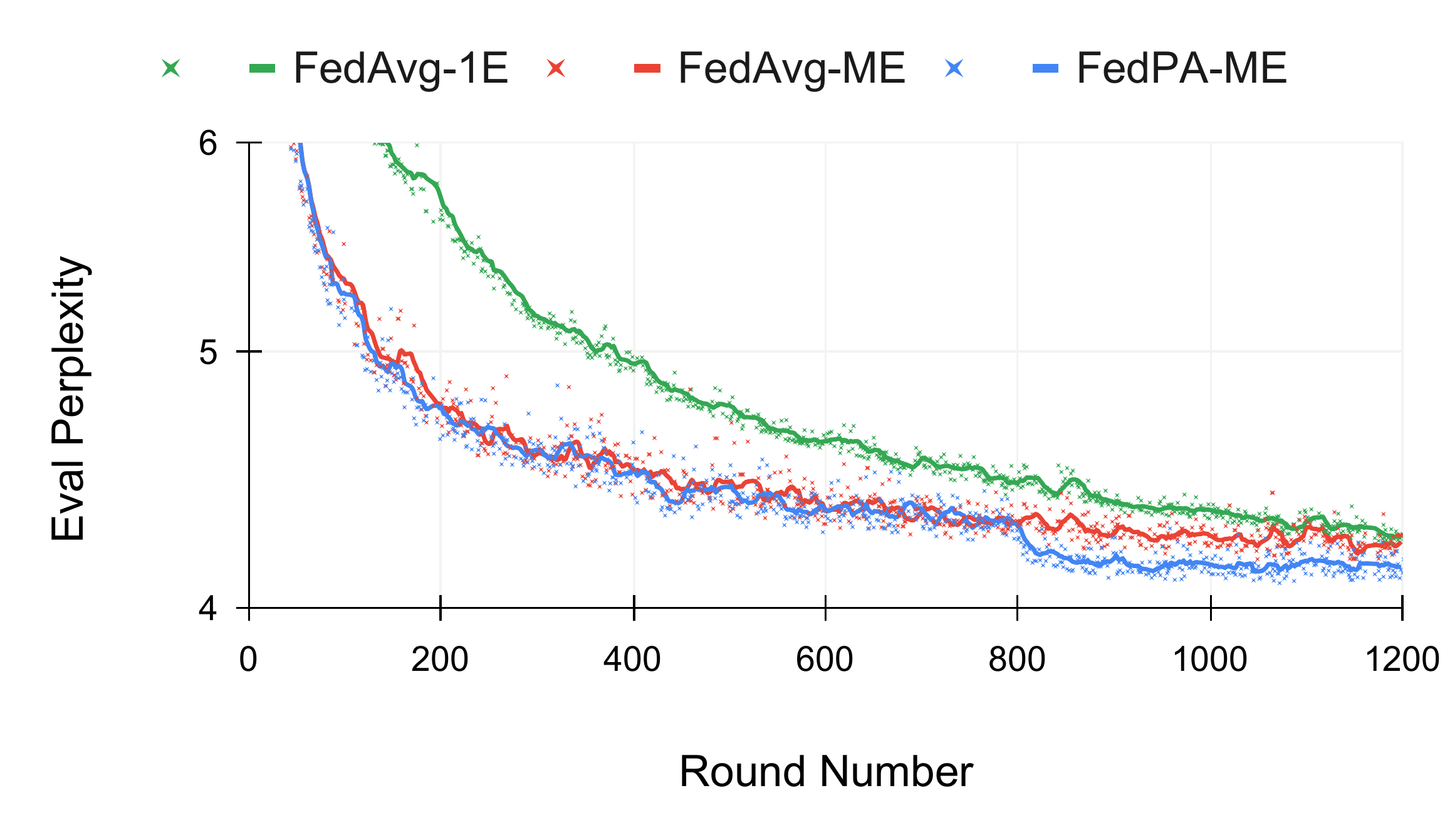}%
        \includegraphics[width=0.495\linewidth]{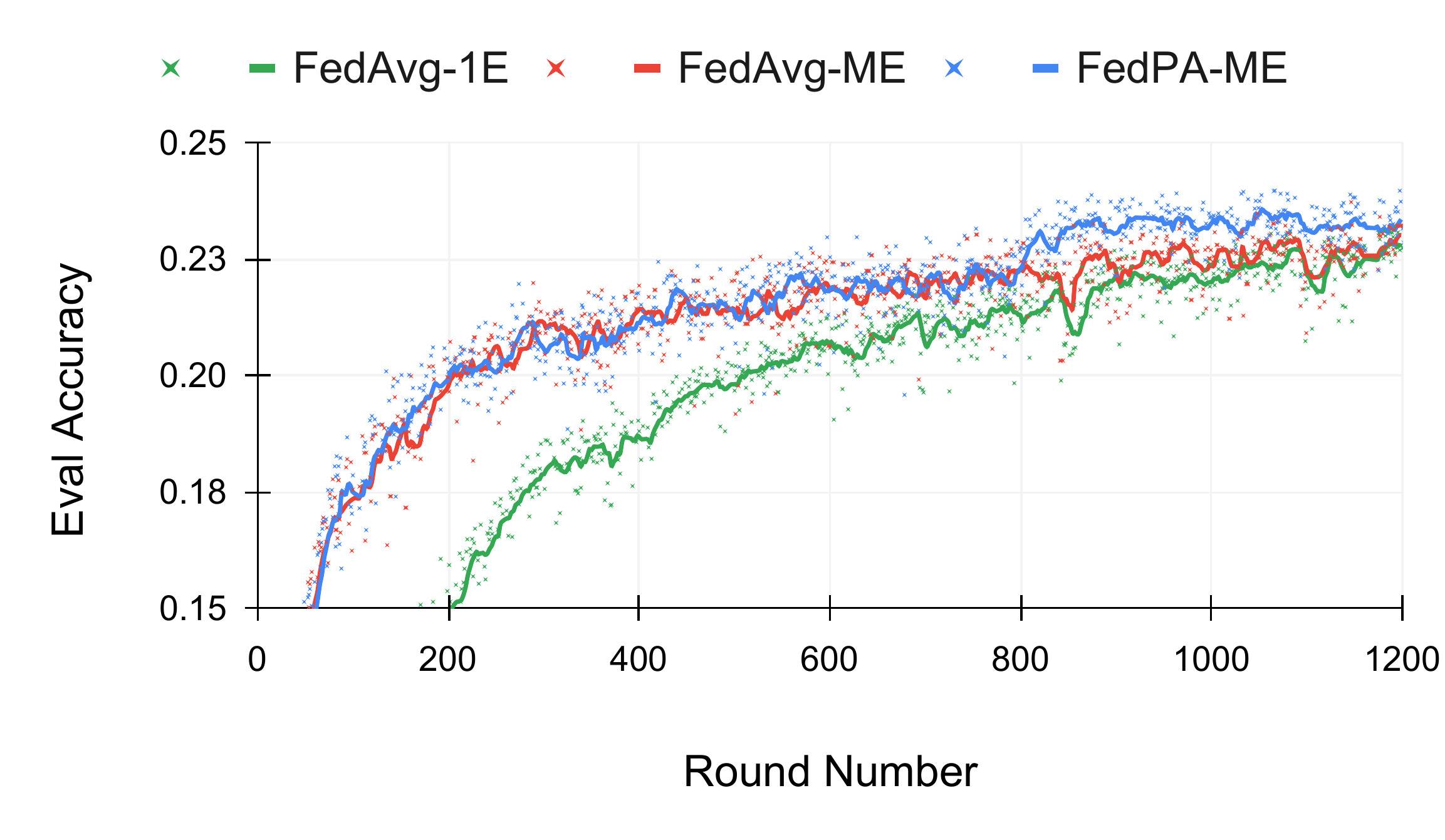}
    \end{subfigure}
    \caption{%
    Evaluation metrics for \FedAvg and \FedPA computed at each training round on (a) \cifar and (b) \stackoverflow~NWP tasks.}
    \label{fig:learning-curves-appendix-cifar-so-nwp}
\end{figure}

\begin{figure}[t]
    \centering
    \includegraphics[width=0.495\linewidth]{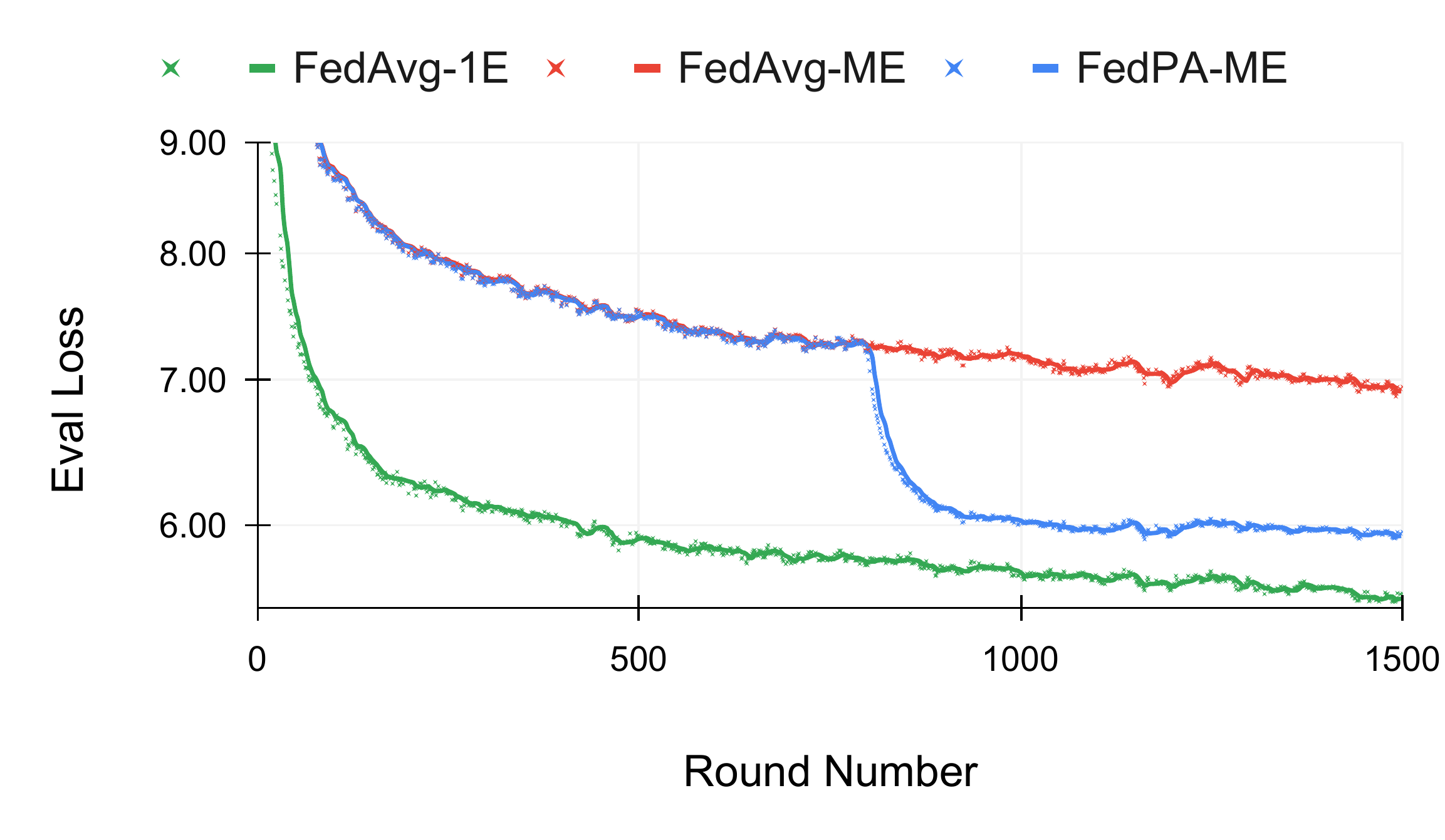}%
    \includegraphics[width=0.495\linewidth]{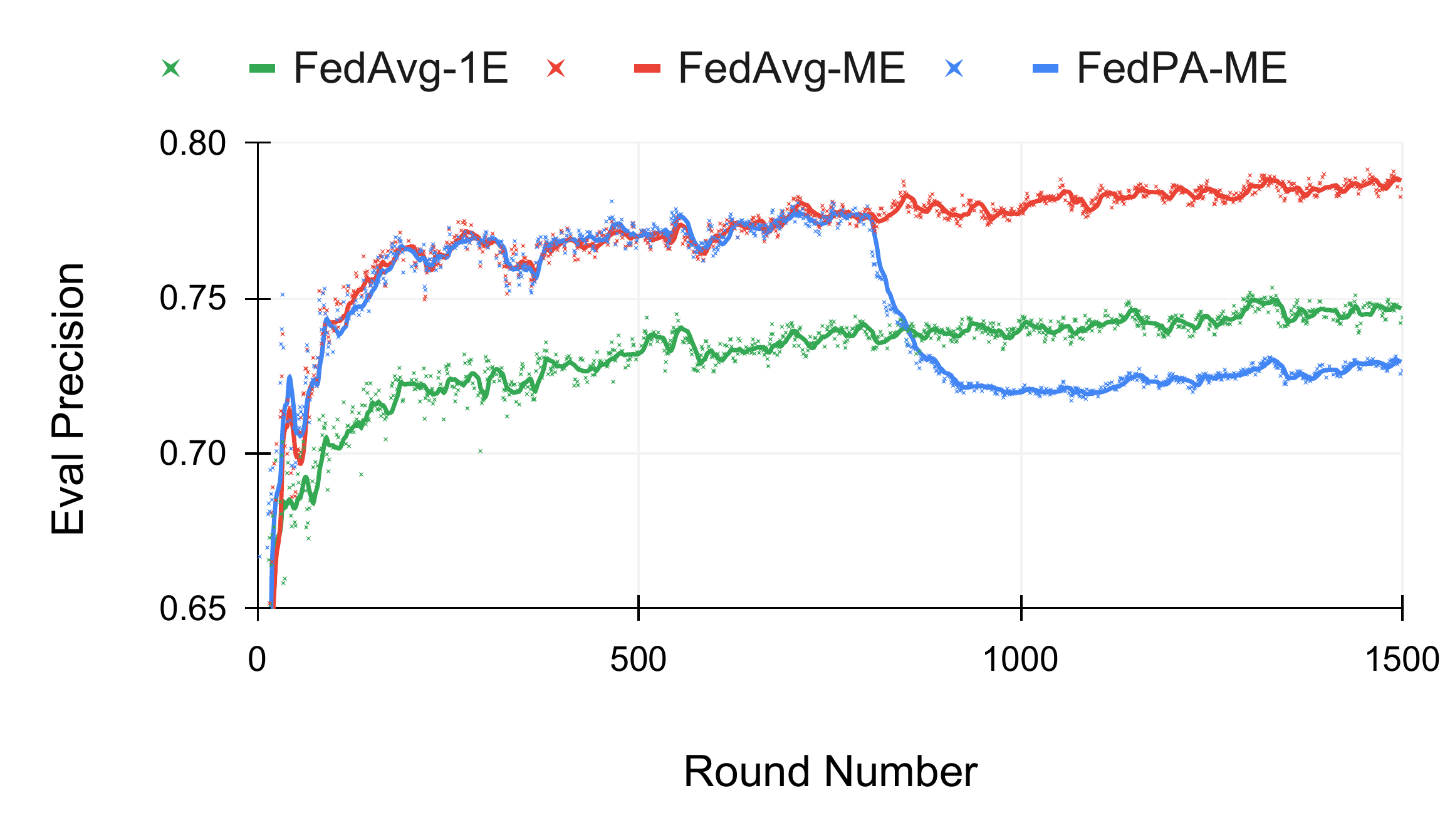}
    \includegraphics[width=0.495\linewidth]{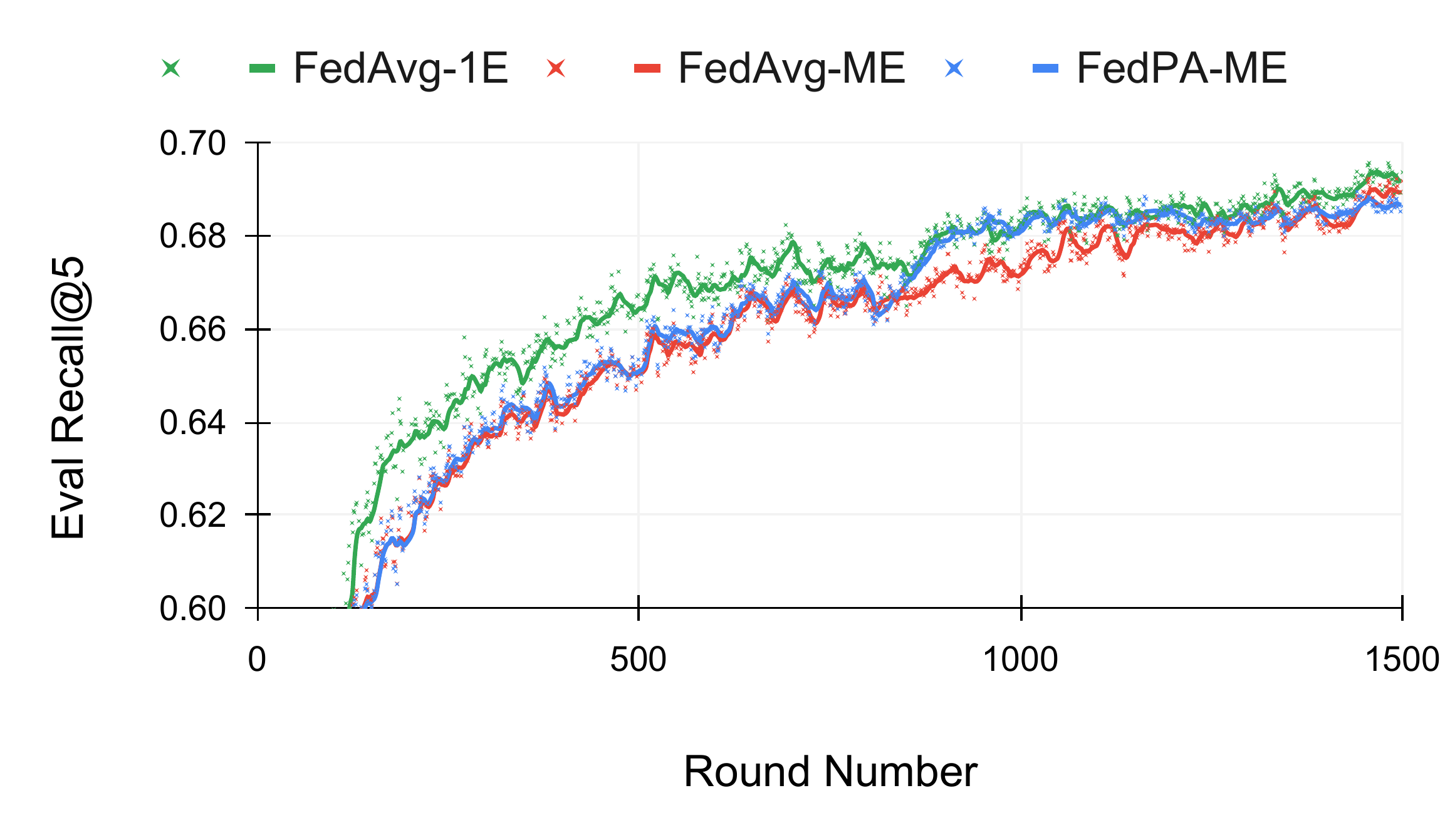}%
    \includegraphics[width=0.495\linewidth]{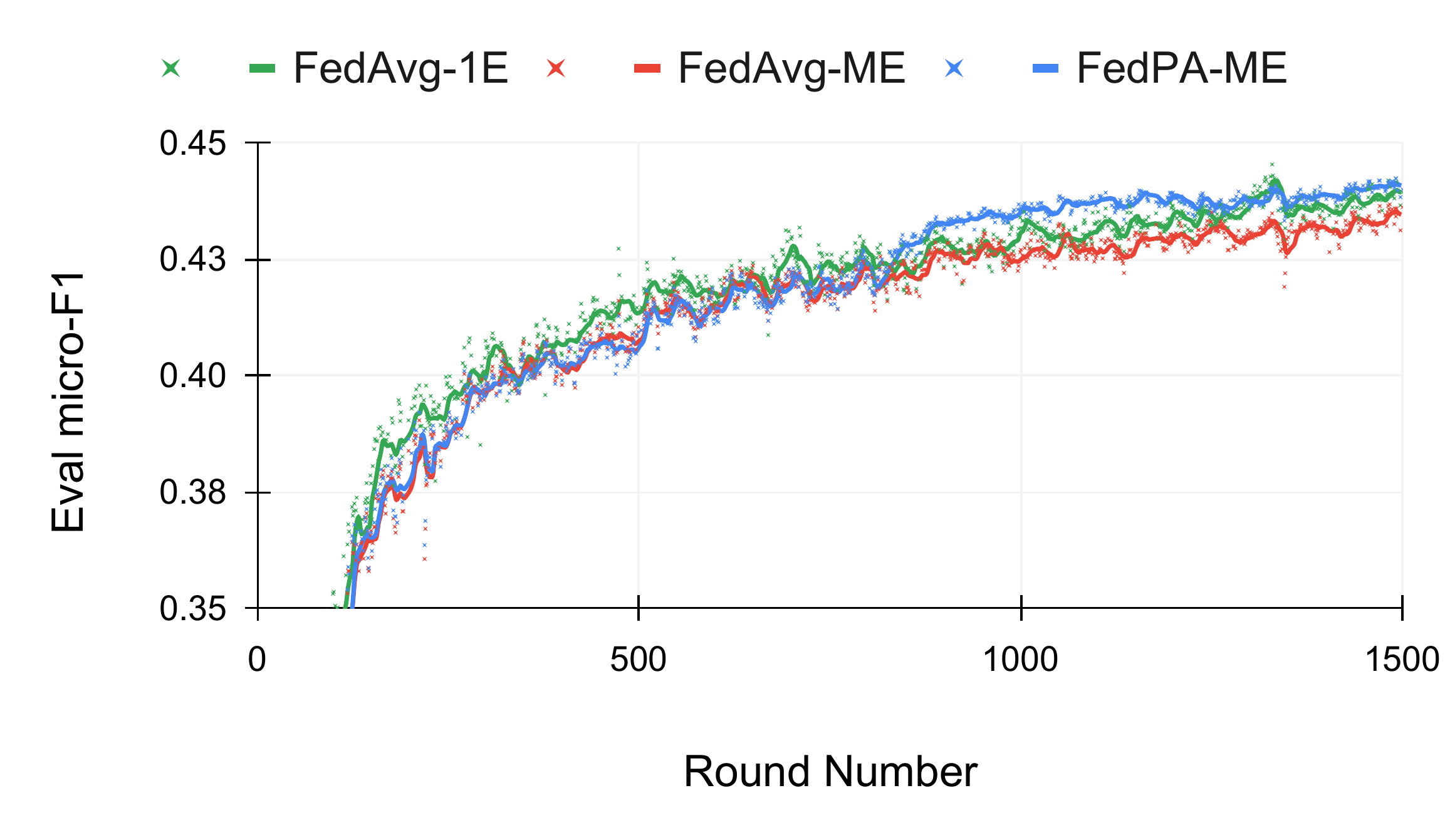}
    \includegraphics[width=0.495\linewidth]{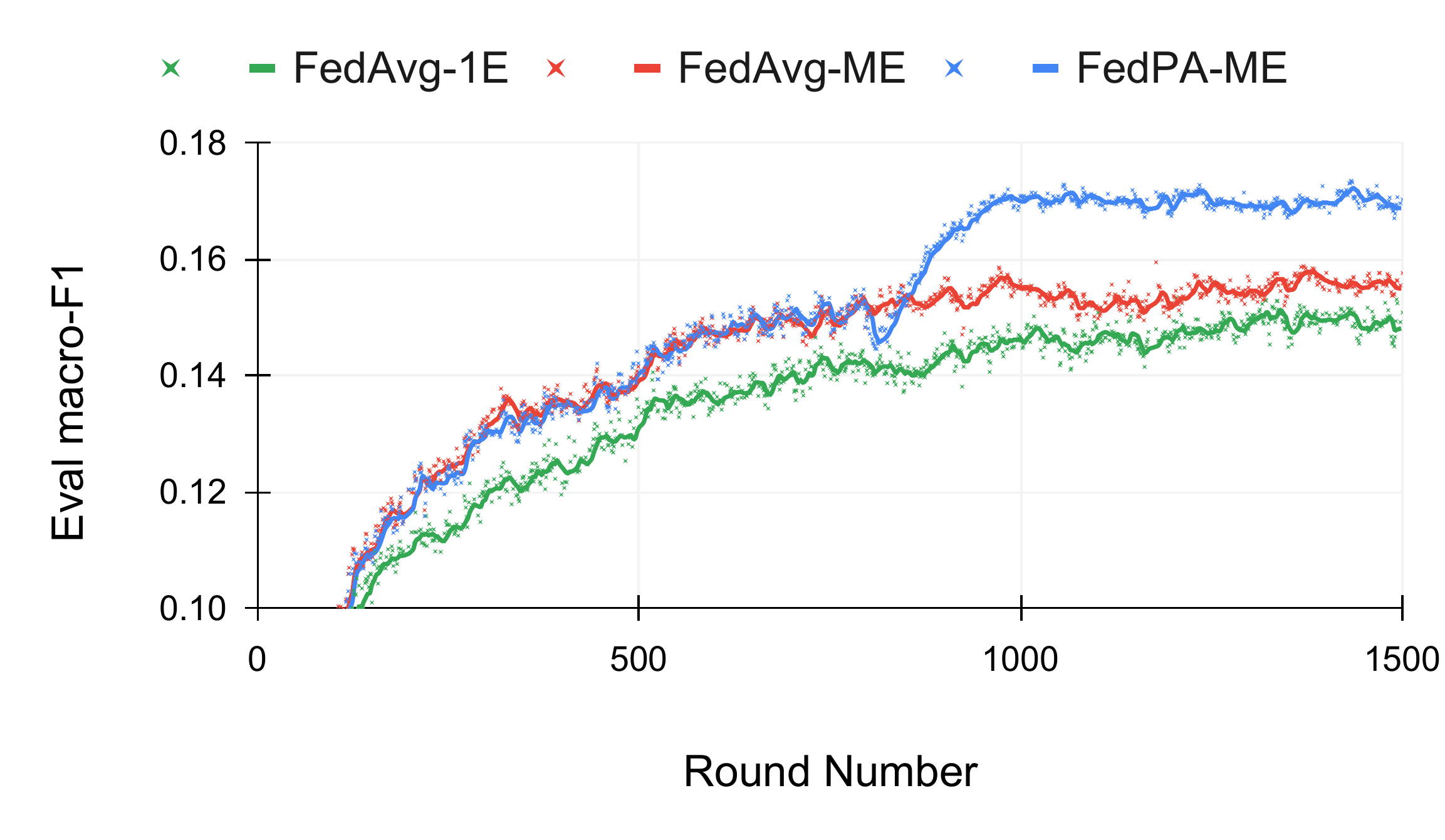}
    \caption{%
    Evaluation metrics for \FedAvg and \FedPA computed at each training round on \stackoverflow~LR.
    Evaluation loss, average precision and recall, and micro- and macro-averaged F1 for \textcolor{google-green}{\FedAvg[1]}, \textcolor{google-red}{\FedAvg[5]}, and \textcolor{google-blue}{\FedPA[5]}.}
    \label{fig:learning-curves-appendix-so-lr}
\end{figure}

\end{document}